\PassOptionsToPackage{dvipsnames}{xcolor}
\PassOptionsToPackage{hypertexnames=false}{hyperref}
\documentclass{article}
\usepackage{enumitem}

\usepackage{etoolbox}
\newtoggle{colt}
\newcommand{\colt}[1]{\iftoggle{colt}{#1}{}}
\newcommand{\arxiv}[1]{\iftoggle{colt}{}{#1}}

\colt{
\crefname{appendix}{Appendix}{Appendices}
\Crefname{appendix}{Appendix}{Appendices}
}

\colt{
\title[Steering diffusion models with quadratic rewards]{Steering diffusion models with quadratic rewards: \\a fine-grained analysis}
\usepackage{times}
}
\arxiv{ 
\title{Steering diffusion models with quadratic rewards: \\a fine-grained analysis}

}
\colt{
\coltauthor{%
 \Name{Author Name1} \Email{abc@sample.com}\\
 \addr Address 1
 \AND
 \Name{Author Name2} \Email{xyz@sample.com}\\
 \addr Address 2%
}
}

\usepackage[utf8]{inputenc} % allow utf-8 input

\usepackage[T1]{fontenc}    % use 8-bit T1 fonts
\usepackage{hyperref}       % hyperlinks

\usepackage{url}            % simple URL typesetting
\usepackage{booktabs}       % professional-quality tables
\usepackage{amsfonts}       % blackboard math symbols
\usepackage{latexsym,amssymb,amscd,amsopn,amsmath}
\usepackage{bbm}
\usepackage{mathrsfs}
\usepackage{nicefrac}       % compact symbols for 1/2, etc.
\usepackage{microtype}      % microtypography
\usepackage{xcolor}         % colors
\usepackage{algorithm}
\usepackage{verbatim}
\usepackage[noend]{algpseudocode}

\usepackage{xspace}
\usepackage{mathtools}
\usepackage{svg}

\usepackage{makecell}
\usepackage{aliascnt}
\arxiv{
\usepackage[nameinlink,capitalize]{cleveref}
\usepackage{amsthm}
\usepackage{geometry}[1in]
}
\usepackage{thmtools}

\definecolor{dgreen}{rgb}{0,0.5,0}
\hypersetup{
  colorlinks=true,
  linkcolor=blue,
  filecolor=blue,
  citecolor = dgreen,      
  urlcolor=cyan,
}

\arxiv{
\theoremstyle{plain}
\newtheorem{theorem}{Theorem}
\newtheorem{lemma}[theorem]{Lemma}
\newtheorem{corollary}[theorem]{Corollary}

\newtheorem{assumption}[theorem]{Assumption}
\theoremstyle{definition}
\newtheorem{definition}[theorem]{Definition}

\newtheorem{remark}[theorem]{Remark}

\crefname{assumption}{Assumption}{Assumptions}
}
\numberwithin{theorem}{section}
\numberwithin{lemma}{section}
\numberwithin{definition}{section}

\newenvironment{namedproof}[1]{\paragraph{Proof of #1.}\hspace{-1em}}{\hfill$\blacksquare$\vspace{1em}}

\newcommand{\nc}{\newcommand}
\nc{\DMO}{\DeclareMathOperator}
\newcount\Comments  % 0 suppresses notes to selves in text
\DeclareMathOperator*{\argmax}{arg\,max}

\nc{\Moracle}{\MM^{\mathsf{oracle}}}
\nc{\tilMoracle}{\til{\MM}^{\mathsf{oracle}}}
\nc{\SimulateReduction}{\texttt{SimulateReduction}\xspace}
\nc{\TestReduction}{\texttt{DistinguishReduction}\xspace}
\nc{\SimulateSampling}{\texttt{SimulateSampling}\xspace}
\nc{\SimulateRegression}{\texttt{SimulateRegression}\xspace}
\nc{\Osample}{{\MO_{\mathsf{samp}}}}
\nc{\Oregress}{{\MO_{\mathsf{regress}}}}
\nc{\Oregressp}{{\MO'_{\mathsf{regress}}}}
\nc{\Obandits}{{\MO_{\mathsf{bandits}}}}
\nc{\dom}{\mathsf{dom}}
\nc{\SF}{\mathscr{F}}
\nc{\Fchernoff}{\MF^{\mathsf{chernoff}}}

\DMO{\prox}{prox}
\DMO{\Span}{span}
\DMO{\UCB}{UCB}
\DMO{\LCB}{LCB}
\nc{\expl}[2]{\ME^{#1}_{#2}}
\nc{\tilmdp}[1]{\til \MM({#1})}
\nc{\barpdp}[2]{\ol \MP_{#1}({#2})}
\nc{\barmdp}[1]{\ol \MM({#1})}
\nc{\hatmdp}[1]{\wh \MM({#1})}
\nc{\rem}[2]{\MR_{#1}({#2})}
\nc{\Pigen}{\Pi^{\rm gen}}
\nc{\Pidet}{\Pi^{\rm det}}
\nc{\PiZ}{\Pi_{\SZ}^{\rm markov}}
\nc{\und}[3]{\MU_{{#1}}^{{#2}}({#3})}
\nc{\zlow}[2]{\MZ_{{#1}}^\lowv({#2})}
\nc{\dg}{\dagger}
\nc{\bB}{\mathbf{B}}
\nc{\unif}{\mu_{\rm unif}}
\nc{\indsig}[2]{\mathcal{I}_{#1}({#2})}
\nc{\total}{{\rm fin}}
\nc{\early}{{\rm pre}}
\nc{\zsink}{z_{\rm sink}}
\nc{\lowv}{{\rm low}}
\nc{\oo}[1]{\texttt{o}({#1})}
\nc{\posnrm}[1]{\left[ {#1} \right]_+}
\nc{\negnrm}[1]{\left[ {#1} \right]_-}
\nc{\tvnrm}[1]{\left\| {#1} \right\|_1}
\nc{\absval}[1]{\left| {#1} \right|}
\nc{\normalize}[1]{\mathfrak{n}\left({#1}\right)}

\nc{\SZ}{\textsf{Z}}
\nc{\SO}{\textsf{O}}
\nc{\suff}[2]{{\rm suff}_{#1}({#2})}
\nc{\UPhi}{\mathscr{U}_{X,H}}
\nc{\UPhis}{\til{\mathscr{U}}_{X,H,\MF}}
\nc{\SV}{\mathscr{V}}
\nc{\Phiset}{\Phi_{X,H}}
\nc{\Phisets}{\til{\Phi}_{X,H,\MF}}
\nc{\Lyu}{{\mathtt{Lyu}}}
\nc{\wAlg}{{\widetilde \Alg}}

\nc{\ApproxMDP}{\texttt{ConstructMDP}\xspace}
\nc{\mainalg}{\texttt{BaSeCAMP}\xspace} % BArycentric SpannEr policy Cover with Approximate MdP
\nc{\bspanner}{\texttt{BarySpannerPolicy}\xspace}

\nc{\gamvec}{\gamma}
\nc{\til}{\widetilde}
\nc{\td}{\tilde}
\nc{\wh}{\widehat}
\nc{\old}[1]{\ifnum\Comments=1 {\color{brown}  [COPIED: #1]}\fi}
\definecolor{darkgreen}{rgb}{0.0, 0.5, 0.0}
\nc{\noah}[1]{\ifnum\Comments=1 {\color{darkgreen} [ng: #1]}\fi}
\nc{\dhruv}[1]{\ifnum\Comments=1 {\color{purple} [dr: #1]}\fi}
\nc{\BP}{\mathbb{P}}
\nc{\BM}{\mathbb{M}}
\nc{\bbapx}{\bb^{\rm apx}}
\nc{\bbapxs}[1]{\bb^{\rm apx, {#1}}}

\nc{\fools}[3]{\MF_{#3}({#1}, {#2})}
\nc{\fool}[2]{\MF({#1},{#2})}
\nc{\clip}[2]{{\rm clip}\left[ \left. {#1} \right| {#2} \right]}
\nc{\imax}{\omega}
\DMO{\conv}{conv}
\nc{\MH}{\mathcal{H}}
\nc{\CH}{\mathscr{H}}
\nc{\CB}{\mathscr{B}}
\nc{\cD}{\mathscr{D}}
\nc{\MC}{\mathcal{C}}
\nc{\st}{\star}
\newcommand{\Pp}{\mathbb{P}}
\nc{\lng}{\langle}
\nc{\rng}{\rangle}
\DMO{\OOPT}{opt}
\nc{\dopt}[2]{\ell_{\OOPT}({#1},{#2})}
\nc{\grad}{\nabla}
\nc{\MG}{\mathcal{G}}
\nc{\MP}{\mathcal{P}}
\nc{\PP}{\mathbb{P}}
\nc{\TT}{\mathbb{T}}
\nc{\TTmax}{\TT_{\max}}
\DMO{\Ham}{Ham}
\DMO{\Gap}{Gap}
\DMO{\GD}{GD}
\DMO{\GDA}{GDA}
\DMO{\EG}{EG}
\DMO{\OGDA}{OGDA}
\DMO{\Unif}{Unif}
\DMO{\Tr}{Tr}
\nc{\ul}{\underline}
\nc{\ol}{\overline}
\nc{\Qu}{\ul{Q}}
\nc{\Qo}{\ol{Q}}
\nc{\Ro}{\ol{R}}
\nc{\Vu}{\ul{V}}
\nc{\Vo}{\ol{V}}
\nc{\RanQ}{\Delta Q}
\nc{\RanV}{\Delta V}
\nc{\clipQ}{\Delta \breve{Q}}
\nc{\frzQ}{\Delta \mathring{Q}}
\nc{\clipV}{\Delta \breve{V}}
\nc{\clipdelta}{\breve{\delta}}
\nc{\cliptheta}{\breve{\theta}}
\nc{\delmin}{\Delta_{{\rm min}}}
\nc{\delmins}[1]{\Delta_{{\rm min},{#1}}}
\nc{\gapfinal}[1]{\max \left\{ \frac{\frzQ_{{#1}}^{k^\st}(x,a)}{2H}, \frac{\delmin}{4H} \right\}}
\nc{\post}[2]{R({#1}; {#2})}
\nc{\posts}[3]{R_{#3}({#1}; {#2})}
\nc{\MAJ}{\mathsf{MAJ}}
\nc{\Dnull}{D^{\circ}}

\nc{\BKW}{\mathtt{BKW}}
\nc{\Dec}{\mathtt{Dec}}
\nc{\delreg}{\delta_{\mathsf{reg}}}
\nc{\delreal}{\delta_{\mathsf{real}}}
\nc{\Sreg}{S_{\mathsf{Reg}}}
\nc{\Treg}{T_{\mathsf{Reg}}}
\nc{\PC}{\mathtt{PC}}
\nc{\alphaPC}{\alpha_{\mathsf{PC}}}
\nc{\TPC}{T_{\mathsf{PC}}}
\nc{\SPC}{S_{\mathsf{PC}}}
\nc{\delsmall}{{\delta_{\mathsf{small}}}}
\nc{\CL}{\mathtt{ContrastLearn}}
\nc{\Select}{\mathtt{Select}}
\nc{\piunif}{{\pi_{\mathsf{unif}}}}
\nc{\picov}{{\pi_{\mathsf{cov}}}}
\nc{\ZZ}{\mathbb{Z}}
\nc{\sk}{{\mathsf{sk}}}
\nc{\Enc}{\mathtt{Enc}}
\nc{\EntLPN}{\mathtt{EntangleLPN}}
\nc{\mureg}{{\mu_{\mathsf{reg}}}}
\nc{\PPE}{\mathtt{PPE}}
\nc{\FQI}{\mathtt{FQI}}
\nc{\False}{\mathtt{False}}
\nc{\True}{\mathtt{True}}
\nc{\epreg}{\epsilon_{\mathsf{reg}}}
\nc{\algnst}[1]{\begin{align*}#1\end{align*}}
\nc{\algn}[1]{\begin{align}#1\end{align}}
\nc{\matx}[1]{\left(\begin{matrix}#1\end{matrix}\right)}

\nc{\pimix}{{\pi_{\mathsf{mix}}}}
\nc{\BPC}{{B_{\mathsf{PC}}}}
\nc{\size}{\mathrm{size}}
\nc{\OLIVE}{\texttt{OLIVE}}
\nc{\RP}{\textsf{RP}}
\nc{\cprp}{c_{\mathsf{PRP}}}
\nc{\Mtoy}{\MM_{\mathsf{toy}}}
\nc{\Brute}{\mathtt{Brute}}

\nc{\nuu}{\nu}

\nc{\bel}[1]{\mathbf{b}({#1})}
\nc{\nbel}[1]{\bar{\mathbf{b}}({#1})}
\nc{\sbel}[2]{\mathbf{b}'_{#1}({#2})}
\nc{\nsbel}[2]{\bar{\mathbf{b}}'_{#1}({#2})}

\nc{\bv}{\mathbf{v}}
\nc{\bone}{\mathbf{1}}
\nc{\bX}{\mathbf{X}}
\nc{\be}{\mathbf{e}}
\nc{\bY}{\mathbf{Y}}
\nc{\bG}{\mathbf{G}}
\nc{\bz}{\mathbf{z}}
\nc{\bw}{\mathbf{w}}
\nc{\bA}{\mathbf{A}}
\nc{\bJ}{\mathbf{J}}
\nc{\bK}{\mathbf{K}}
\nc{\bb}{\mathbf{b}}
\nc{\ba}{\mathbf{a}}
\nc{\bs}{\mathbf{s}}
\nc{\bzero}{\mathbf{0}}
\nc{\bi}{\mathbf{i}}
\nc{\Edistinct}{\ME^{\mathsf{distinct}}}
\nc{\bc}{\mathbf{c}}
\nc{\bC}{\mathbf{C}}
\nc{\BR}{\mathbb R}
\nc{\BA}{\mathbb{A}}
\nc{\SA}{\mathscr{A}}
\nc{\BC}{\mathbb C}
\nc{\bx}{\mathbf{x}}
\nc{\bS}{\mathbf{S}}
\nc{\bM}{\mathbf{M}}
\nc{\bR}{\mathbf{R}}
\nc{\bN}{\mathbf{N}}
\nc{\by}{\mathbf{y}}
\nc{\sy}{y}
\nc{\sx}{x}

\nc{\cF}{\mathcal{F}}
\nc{\cE}{\mathcal{E}}
\nc{\MO}{\mathcal O}
\nc{\MQ}{\mathcal{Q}}
\nc{\CO}{\mathscr{O}}
\nc{\MU}{\mathcal{U}}
\nc{\ME}{\mathcal{E}}
\nc{\MN}{\mathcal{N}}
\nc{\MK}{\mathcal{K}}
\nc{\MM}{\mathcal{M}}
\nc{\MS}{\mathcal{S}}
\nc{\MT}{\mathcal{T}}
\nc{\BF}{\mathbb F}
\nc{\BQ}{\mathbb Q}
\nc{\MX}{\mathcal{X}}
\nc{\MA}{\mathcal{A}}
\nc{\MD}{\mathcal{D}}
\nc{\MB}{\mathcal{B}}
\nc{\MZ}{\mathcal{Z}}
\nc{\MJ}{\mathcal{J}}
\nc{\MW}{\mathcal{W}}
\nc{\MR}{\mathcal{R}}
\nc{\MY}{\mathcal{Y}}
\nc{\ML}{\mathcal{L}}
\nc{\BZ}{\mathbb Z}
\nc{\BN}{\mathbb N}
\nc{\ep}{\epsilon}
\nc{\gapfn}[1]{\varepsilon_{#1}}
\nc{\ggapfn}[2]{\varphi_{#1}({#2})}
\nc{\epsahk}{\gapfn{0}}
\nc{\BH}{\mathbb H}
\nc{\BG}{\mathbb{G}}
\nc{\D}{\Delta}
\nc{\MF}{\mathcal{F}}
\nc{\One}{\mathbbm{1}}
\nc{\bOne}{\mathbf{1}}
\nc{\Aopt}{\mathcal{A}^{\rm opt}}
\nc{\Amul}{\mathcal{A}^{\rm mul}}

\nc{\SP}{\mathsf P}
\nc{\SQ}{\mathsf Q}

\nc{\DO}{\accentset{\circ}{\D}}
\nc{\mf}{\mathfrak}
\nc{\mfp}{\mathfrak{p}}
\nc{\mfq}{\mf{q}}
\nc{\Sp}{\mbox{Spec}}
\nc{\Spm}{\mbox{Specm}}
\nc{\hookuparrow}{\mathrel{\rotatebox[origin=c]{90}{$\hookrightarrow$}}}
\nc{\hookdownarrow}{\mathrel{\rotatebox[origin=c]{-90}{$\hookrightarrow$}}}
\nc{\hra}{\hookrightarrow}
\nc{\tra}{\twoheadrightarrow}
\nc{\sgn}{{\rm sgn}}
\nc{\aut}{{\rm Aut}}
\nc{\Hom}{{\rm Hom}}
\nc{\img}{{\rm Im}}
\DMO{\id}{Id}
\DMO{\supp}{supp}
\DMO{\KL}{KL}
\nc{\kld}[2]{\KL({#1}||{#2})}
\nc{\ren}[2]{D_2({#1}||{#2})}
\nc{\chisq}[2]{\chi^2({#1}||{#2})}
\nc{\tvd}[2]{D_{\mathsf{TV}}\left({#1}, {#2}\right)}
\nc{\hell}[2]{H^2({#1}, {#2})}
\DMO{\BSS}{BSS}
\DMO{\BES}{BES}
\DMO{\BGS}{BGS}
\DMO{\poly}{poly}
\nc{\indep}{\perp}
\DMO{\sink}{sink}
\DMO{\nosink}{nosink}
\nc{\sinks}{s^{\sink}}
\nc{\sinkobs}{o^{\sink}}
\nc{\fp}[1]{\MP_1({#1})}
\nc{\BO}{\mathbb{O}}
\nc{\BT}{\mathbb{T}}

\nc{\RR}{\mathbb{R}}
\nc{\NN}{\mathbb{N}}
\nc{\Gradient}{\nabla}
\DMO{\diag}{diag}
\nc{\norm}[1]{\left \lVert #1 \right \rVert}
\DMO*{\EE}{\mathbb{E}}
\nc{\LPN}{\mathsf{LPN}}
\DMO{\Ber}{Ber}
\nc{\Regress}{\mathtt{Regress}}
\nc{\LFC}{\mathtt{LearnFromCorr}}
\nc{\RegressAlg}{\mathtt{RegressAlg}}
\nc{\DrawTraj}{\mathtt{DrawTrajectory}}
\nc{\pizero}{{\pi_{\mathsf{zero}}}}
\nc{\Tred}{{T_{\mathsf{red}}}}
\nc{\epred}{{\epsilon_{\mathsf{red}}}}
\nc{\TriAlg}{\mathtt{GenerateTriangleLPN}}
\nc{\Alg}{\mathtt{Alg}}
\nc{\AffSample}{\mathtt{AffSample}}
\nc{\br}{\mathbf{r}}
\nc{\TV}{{\mathsf{TV}}}
\DMO{\Law}{Law}
\DMO{\Sym}{Sym}
\nc{\bu}{\mathbf{u}}
\nc{\Reg}{\mathtt{Reg}}
\nc{\Breg}{B_{\mathsf{Reg}}}
\DMO{\dc}{dc}

\DMO{\PR}{Pr}
\renewcommand{\Pr}{\PR}
\DMO*{\Prr}{Pr}
\nc{\E}{\mathbb{E}}
\nc{\ra}{\rightarrow}
\renewcommand{\t}{\top}

\nc{\fq}{\mathfrak{q}}
\nc{\laplace}{\Delta}
\nc{\pavg}{\overline p}
\nc{\CPI}{C_{\mathsf{PI}}}
\nc{\convolve}{\star}
\nc{\normal}{\mathcal{N}}
\nc{\phat}{\widehat p}
\nc{\tstar}{t^\star}
\nc{\dd}{\mathrm{d}}
\nc{\nutil}{\widetilde \nu}

\DeclarePairedDelimiter{\prn}{(}{)}

\newcommand{\Dkl}[2]{D_{\mathsf{KL}}\prn*{#1\|#2}}
\newcommand{\Dtv}[2]{D_{\mathsf{TV}}\prn*{#1,#2}}
\nc{\pstar}{p^\star}
\nc{\ind}[1]{^{\footnotesize (#1)}}
\nc{\cL}{\mathcal{L}}
\nc{\qtil}{\widetilde{q}}
\nc{\indic}{\mathbbm{1}}
\nc{\pbar}{\ol p}
\nc{\qbar}{\ol q}
\nc{\qstar}{q^\star}
\nc{\epsscore}{\epsilon_{\mathsf{score}}}

\nc{\khat}{\hat \kappa}
\nc{\Zhat}{\hat Z}

\newcommand{\linearalg}{\mathsf{LinTiltSampler}}
\newcommand{\basealg}{\mathsf{UnadjustedSampler}}
\newcommand{\sstar}{s^\star}
\newcommand{\psdalg}{\mathsf{PSDTiltSampler}}
\newcommand{\partitionalg}{\mathsf{EstimateNormalization}}
\nc{\epfinal}{\epsilon_{\mathsf{final}}}
\nc{\cS}{\mathcal{S}}
\nc{\xtil}{\widetilde x}
\nc{\ztil}{\widetilde z}
\nc{\twist}[1]{p_{#1}}
\nc{\htwist}[1]{\phat_{#1}}
\nc{\Cnorm}{C_{\mathsf{norm}}}
\nc{\Clip}{C_{\mathsf{lip}}}
\nc{\Ckl}{C_{\mathsf{KL}}}
\nc{\WW}{\mathsf{W}_2}
\DMO{\rank}{rank}
\nc{\rr}{\mathbf{r}}
\nc{\NP}{\mathrm{NP}}
\nc{\BPP}{\mathrm{BPP}}
\arxiv{
\usepackage{natbib}
\date{}
}

\begin{document}

\maketitle

\vspace{-3em}
\begin{center}
\large
\setlength{\tabcolsep}{20pt}
\begin{tabular}{ccc}
\makecell{Ankur Moitra \\ \small{\texttt{moitra@mit.edu}}}
&
\makecell{Andrej Risteski \\ \small{\texttt{aristesk@andrew.cmu.edu}}}
&
\makecell{Dhruv Rohatgi \\ \small{\texttt{drohatgi@mit.edu}}}
\end{tabular}
\end{center}
\vspace{1em}

\begin{abstract}%
  Inference-time algorithms are an emerging paradigm in which pre-trained models are used as subroutines to solve downstream tasks. Such algorithms have been proposed for tasks ranging from inverse problems and guided image generation to reasoning. However, the methods currently deployed in practice are heuristics with a variety of failure modes---and we have very little understanding of when these heuristics can be efficiently improved.   

  In this paper, we consider the task of sampling from a reward-tilted diffusion model---that is, sampling from $p^{\star}(x) \propto p(x) \exp(\rr(x))$---given a reward function $\rr$ and pre-trained diffusion oracle for $p$. We provide a fine-grained analysis of the computational tractability of this task for quadratic rewards $\rr(x) = x^\t A x + b^\t x$. We show that linear-reward tilts are always efficiently sampleable---a simple result that seems to have gone unnoticed in the literature. We use this as a building block, along with a conceptually new ingredient---the Hubbard-Stratonovich transform---to provide an efficient algorithm for sampling from low-rank positive-definite quadratic tilts, i.e. $\rr(x) = x^\t A x$ where $A$ is positive-definite and of rank $O(1)$. For negative-definite tilts, i.e. $\rr(x) = - x^\t A x$ where $A$ is positive-definite, we prove that the problem is intractable even if $A$ is of rank 1 (albeit with exponentially-large entries).  
\end{abstract}

\colt{
\begin{keywords}%
  diffusion models; inference-time algorithms; reward-tilted sampling; 
\end{keywords}
}
\section{Introduction}

It is increasingly common to use pre-trained generative models as components within algorithms for more complex downstream tasks. Broadly, such algorithms are termed as \emph{inference-time} or \emph{meta-generation} algorithms \citep{welleck2024decoding}. Examples of tasks that can be framed in this paradigm include \emph{inverse problems} in the sciences \citep{bruna2024provable}, guidance to perform conditional sampling in image generation \citep{dhariwal2021diffusion}, and tilting a distribution by a (trained or pre-specified) reward in reasoning tasks \citep{korbak2205rl, geuter2025guided} and even protein design \citep{lisanza2025multistate, hartman2025controllable}.  

In this paper, we focus on the algorithmic task of steering a pre-trained diffusion model according to a given reward function. Specifically, given a diffusion model for a \emph{base distribution} $p(x)$ (which provides access to the scores of convolutions of $p$ with Gaussian noise), and a reward function $\rr(x)$, our task is to sample from the \emph{tilted distribution}
\[\pstar(x) \propto p(x) \exp(\rr(x)).\]
%the problem of sampling from reward-tilted diffusion models, namely sampling from by $p^*(x) \propto p(x)\exp(r(x))$, assuming we have access to the scores of convolutions of $p$ with Gaussian noise, as well the reward $r$. If we pre-trained a diffusion model to sample from $p$, this is exactly the kind of information about $p$ we would have access to.
 For any reward function $\rr$, $\pstar$ has a natural variational interpretation as the solution to the KL-regularized optimization problem $\argmax_q \EE_q[\rr] - \Dkl{q}{p}$ \citep{korbak2205rl}. Moreover, with appropriate choices of the reward function---ranging from simple quadratics to complex pre-trained reward models---the task of sampling from $\pstar$ formalizes many concrete practical problems, including inpainting \citep{karan2025reguidance}, posterior inference under noisy measurements \citep{bruna2024provable}, and human preference alignment \citep{singhal2025a}. %This setting is general enough to encapsulate many settings of interest. For example, if $r$ is the log-likelihood of a trained classifier, by Bayes rule, the above corresponds to sampling from the class-conditional distribution ; if $r$ is the log-likelihood of a noisy measurement, by Bayes rule, the above distribution corresponds to sampling from the posterior over clean measurements in inverse problems. 
Empirically, the algorithms used are often heuristics, and have known failure modes \citep{chidambaram2024does}. Theoretically, the algorithmic landscape for this task, and the fundamental computational barriers, remain largely unexplored---even when the reward function is very simple.

In this paper, we focus on the family of quadratic reward functions $\rr(x) = x^\t A x + b^\t x$. With this family of rewards, the steering task already encapsulates several of the preceding applications \citep{karan2025reguidance,bruna2024provable}. For example, posterior inference with linear measurements and Gaussian noise corresponds to steering with the log-density of an appropriate Gaussian: $\rr(x) = -\frac{1}{2\sigma^2}\norm{Mx-y}_2^2$. Moreover, quadratic bonuses of the form $\rr(x) = x^\t \Sigma^{-1} x$, for positive-definite $\Sigma$, are commonly used in applications such as rare-event sampling \citep{asmussen2011efficient} and reinforcement learning \citep{tuyls2025representation}, to steer towards (or optimize for) rare or novel generations.

%From an applications point of view, in linear inverse problems with Gaussian noise,  the reward is a negative-definite quadratic $r(x) = -\frac{1}{2\sigma^2}\|Ax-b\|^2$; in rare-event sampling and reinforcement learning rewards of the form $r(x) = \|Px\|^2$ can be used to skew the distribution towards rarer events and as an exploration bonus respectively \citep{}. 
From a theoretical perspective, steering with quadratic rewards is computationally intractable with no further assumptions \citep{gupta2024diffusion, bruna2024provable}. Indeed, if the base distribution $p(x)$ is uniform over the \emph{discrete hypercube}, then $\pstar(x) \propto \exp(x^\t A x + b^\t x)$ is precisely a classical \emph{Ising model} \citep{ising1925beitrag}, and a seminal line of work has shown that exact and approximate sampling from an Ising model can be intractable \citep{jerrum1993polynomial,sly2012computational,galanis2016inapproximability}. However, there is also a rich literature on efficient algorithms for sampling from Ising models with special structure \citep{jerrum1993polynomial,eldan2022spectral,koehler2022sampling,chen2022localization}. Can these sorts of structural assumptions help explain when steering a general base measure $p$ accessible via a diffusion model is indeed tractable?%Can similar guarantees be achieved even in the setting where $p$ is a general base measure (accessible via a score oracle)---in order to develop provable inference-time steering algorithms?

%Finally, by way of connections to the classical topic of sampling from Gibbs measures, sampling from distributions of the form $p^{\star}(x) \propto \exp(x^T A x + b^T x)$ over the discrete hypercube is the classical problem of sampling from Ising models, with a rich history in statistical physics and computer science. Thus, our question can be thought of as an inference-time analogue of these questions---where there is additionally a base measure accessible via a score oracle.   
%---that is, $p^*(x) \propto p(x) \exp(x^T A x + b^T x)$ for some suitable $A, b$. 
%Recent work by \cite{bruna2024provable} studied the problem in the context of inverse problems with linear measurements---as well as proposed algorithms which work in regimes in which the signal-to-noise ratio is favorable. 
\subsection{Contributions}

In this paper, we provide a fine-grained understanding of the computational landscape of steering diffusion models with quadratic rewards $\rr(x) := x^\t A x + b^\t x$. Through the lens of the \emph{rank} of the quadratic form (i.e. $\rank(A)$)---a fundamental quantity in the special case of sampling from Ising models \citep{koehler2022sampling}---we delineate regimes in which the task is computationally tractable, and regimes in which it is not. Precisely, we show the following:

\paragraph{Linear rewards admit an efficient sampler (\cref{sec:linear}).} If the reward is a linear function $\rr(x) = b^\t x$ for some vector $b \in \RR^d$ (in other words, $\rank(A)=0$), we can efficiently sample from the tilt $p^{\star}$. This relatively simple result was seemingly missed in prior literature, and is a consequence of the fact that the scores of $\pstar$ have a simple closed-form expression in terms of the reward function and scores of the base distribution $p$. 

\paragraph{Negative-definite quadratic rewards induce intractability, even for rank-1 matrices (\cref{sec:nsd}).} If $\rr(x) = x^\t A x$, where $A$ is a rank-$1$ negative semi-definite matrix, then the task of sampling from $p^{\star}$ is computationally intractable, assuming $\mathrm{NP}\nsubseteq \mathrm{BPP}$. %This result strengthens results from \citep{bruna2024provable} who show intractability for general negative-definite tilts. 

\paragraph{Positive-definite, low-rank quadratic rewards admit an efficient sampler (\cref{sec:psd}).} If $\rr(x) = x^\t A x$, where $A$ is a rank-$O(1)$ positive semi-definite matrix, then there is an algorithm that samples from $p^*$ (approximately, in Wasserstein distance) in polynomial time.\footnote{We remark that the runtime also scales polynomially in $\norm{A}_2$. This leaves a conceptual gap, since our hardness result for negative-definite rewards does not rule out an algorithm with similar scaling. We believe such an algorithm is unlikely to exist, but we defer resolution of this question to future work.} For completeness, we also show that without the low-rank assumption, the problem becomes computationally intractable (\cref{sec:psd-high-rank}). The algorithmic result is based on the idea of using the \emph{Hubbard-Stratonovich} transform \citep{hubbard1959calculation} to construct a lifting of the target distribution (i.e. introduce a new variable). We show how to sample from this lifted distribution using \emph{sampling from linear-reward tilts} as a subroutine. We believe this result is of additional conceptual interest as it shows that sampling from linear-reward tilts can be a useful building block for designing inference-time algorithms even for more complex reward models.

\subsection{Related work} 

Several prior works have studied the problem of provably steering diffusion models with a quadratic reward function $r(x) = -\frac{1}{2\sigma^2} \|Ax - b\|^2$ \citep{gupta2024diffusion,bruna2024provable,xun2025posterior,parulekar2025efficient}, motivated by the task of linear inverse problems, i.e. posterior inference with noisy linear measurements. \cite{gupta2024diffusion} show that steering with general (i.e. potentially high-rank) negative-definite quadratic rewards is cryptographically hard, and \cite{bruna2024provable} show that the same problem is intractable via reduction from the problem of sampling Ising models. Our result in \cref{sec:nsd} strengthens these by showing that intractability holds even when the quadratic form is rank-$1$.

\cite{bruna2024provable} show that the hardness can be circumvented when $\sigma$ is sufficiently small. In the context of inverse problems, $A$ corresponds to a measurement operator, and $\sigma$ to a signal-to-noise ratio; the small-$\sigma$ regime is easier since it makes $\pstar$ more log-concave. The main tool they use is the Polchinsky flow \citep{bauerschmidt2024stochastic}. \cite{xun2025posterior} develop an efficient algorithm for the same problem, without assumptions on $A$ and $\sigma$, but require that the base distribution $p$ satisfies a condition called ``local log-concavity''. \cite{parulekar2025efficient} develop an efficient algorithm with no assumptions, but it (necessarily) has no guarantee of closeness in total variation or Wasserstein distance. % \dhruv{also \cite{gupta2024diffusion,xun2025posterior}}

Zooming out, \cite{chidambaram2024does} analyze a popular heuristic for guidance---i.e. sampling from a class-conditioned diffusion model---and show some natural examples in which it has the intended behavior, and some examples of failure modes. \cite{karan2025reguidance} provide a wrapper for existing heuristics, which admits a theoretical guarantee that is weaker than approximate sampling. \cite{rohatgi2025taming} consider sampling from tilted distributions when $p$ is the law of an autoregressive language model, and reward access is augmented by a ``process reward'' that estimates the quality of partial generations. %They focus on the propagation of errors in the given process reward.  

Finally, at a technical level, our application of Hubbard-Stratonovich in the PSD setting is inspired by that of \cite{koehler2022sampling} for sampling from (approximately) low-rank Ising models, given access to their unnormalized density. Their algorithm uses the transform in conjunction with techniques from Markov Chains and variational inference, whereas ours requires building on our algorithm for linear rewards.

\section{Preliminaries and notation} 

\paragraph{Notation.} For a set $S$, let $\Delta(S)$ denote the space of distributions over $S$. For $R>0$, let $\CB_{d,2}(R) := \{x\in\RR^d:\norm{x}_2 \leq R\}$. For a distribution $p$, $\supp(p)$ denotes its support. For distributions $p,q$, $\TV(p,q)$ denotes total variation distance and $\WW(p,q)$ denotes Wasserstein-$2$ distance. We let $\normal(\mu,\Sigma)$ denote the Gaussian distribution with mean $\mu$ and covariance $\Sigma$.

\begin{definition}[Noised distribution]
For any distribution $q \in \Delta(\RR^d)$, for any $\sigma \in [0,1]$, we define $q_\sigma \in \Delta(\RR^d)$ as the law of $\sqrt{1-\sigma^2}X+\sigma Z$ where $X \sim q$ and $Z \sim \normal(0,I_d)$. %Equivalently, $q_\sigma$ is the convolution of $q$ with $\normal(0,\sigma^2 I_d)$.\dhruv{switch to OU}
\end{definition}

\paragraph{Formal setting.} Let $p \in \Delta(\RR^d)$ be a \emph{base distribution} and let $\rr(x): \RR^d \to \RR$ be a reward function. Our goal is to (approximately) sample from the \emph{tilted distribution} $\pstar \in \Delta(\RR^d)$ defined by: 
\[\pstar(x) \propto p(x)\exp(\rr(x)),\] 
which is well-defined whenever $\EE_{x\sim p}[\exp(\rr(x))]<\infty$. We make the following standard boundedness assumption \citep{de2022convergence,chen2023sampling}, which ensures that $\pstar$ is well-defined for any continuous reward; our algorithms will have runtime polynomial in the bound $\Cnorm$:

\begin{assumption}[Boundedness]\label{ass:score}
Let $\Cnorm \geq 1$. We assume that $\sup_{x\in\supp(p)} \norm{x}_2 \leq \Cnorm$.%, and $\log(\Dkl{p}{\normal(0,I_d)}) \leq \Ckl$.
\end{assumption} 

We access $p$ via the following oracle, which is exactly the object one would have access to if a diffusion model was (pre-)trained on the distribution $p$. 

\begin{assumption}[Score oracle]
For any $\sigma \in (0,1)$ and $x \in \RR^d$, we can query $s_\sigma(x) := \grad \log p_\sigma(x)$. 
\end{assumption}

Note that we assume exact access to the scores, following prior work \citep{bruna2024provable,parulekar2025efficient}; understanding the effect of errors is an interesting open question.%; we discuss open questions relating to the effect of errors in \dhruv{conclusion}.
 
%The only assumption we make is that the base distribution $p$ is supported in a Euclidean ball of bounded radius, and our algorithmic results will scale polynomially in this bound:%For our algorithmic results, we make the following standard assumptions on the base distribution $p$. Note that we assume that the norm of samples from $p$ is bounded almost surely (rather than in expectation, as in some prior work \citep{chen2023sampling}); this ensures that the tilted distribution $\pstar$ is well-defined, and can likely be relaxed to an assumption that the tails of $p$ enjoy exponential decay.

As shown by \cite{chen2023sampling}, under \cref{ass:score}, the score oracle enables efficient approximate sampling from the \emph{base distribution} $p$ with small Wasserstein-$2$ error:

\begin{theorem}[\cite{chen2023sampling}]\label{thm:ccl}
Let $d,\Cnorm \in \NN$ and $\epsilon>0$. Fix $q \in \Delta(\RR^d)$. Suppose that $\supp(q) \subseteq \CB_{d,2}(\Cnorm)$. There is a $\poly(d, \epsilon^{-1}, \Cnorm)$-time algorithm $\basealg$ that, given parameters $\epsilon,\Cnorm$ as well as query access to $\grad \log q_\sigma(x)$ for any $x\in\RR^d$ and $\sigma\in(0,1)$, produces a sample from distribution $\qtil$ with $\WW(q,\qtil) \leq \epsilon$ and $\supp(\qtil) \subseteq \CB_{d,2}(\Cnorm)$.\footnote{The second property is not explicitly stated by \cite{chen2023sampling}, but it is immediate since projection onto $\CB_{d,2}(\Cnorm)$ is contractive in $\ell_2$.}
\end{theorem}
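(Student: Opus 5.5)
The plan is to run a reverse-time discretized diffusion sampler and then bound its Wasserstein-$2$ error. The statement is the main result of \cite{chen2023sampling}, so the proof mostly consists of verifying that their hypotheses hold under \cref{ass:score}, and then adding the projection step.

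First I fix the forward process. This is the Ornstein--Uhlenbeck process, whose time-$t$ marginal is exactly $q_\sigma$ with $\sigma^2 = 1-e^{-2t}$. So the oracle for $\grad\log q_\sigma$ gives exact scores at every time. I choose a horizon $T = \Theta(\log(d\Cnorm/\epsilon))$, so that $q_{\sigma(T)}$ is $\epsilon$-close to $\normal(0,I_d)$ in $\mathrm{KL}$. I then run the exponential-integrator discretization of the reverse SDE from $\normal(0,I_d)$ using the queried scores, with early stopping at a small time $\delta$.

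The error analysis has three steps, carried out in this order.

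\begin{enumerate}
\item \emph{Initialization error.} This is bounded by exponential contraction of the OU process, together with the fact that $\mathrm{KL}(q_{\sigma(1)}\|\normal(0,I_d))$ is at most $\poly(d,\Cnorm)$ because the support of $q$ is bounded.
\item \emph{Discretization error.} This is controlled via Girsanov, using the bound $\EE\norm{\grad\log q_\sigma}^2 \lesssim d/\sigma^2$ and the Lipschitz-in-time regularity of the score. The regularity estimate follows because $q_\sigma$ is a Gaussian convolution of a measure supported in $\CB_{d,2}(\Cnorm)$.
\item \emph{Early-stopping error.} The trajectory is compared to $q_{\sigma(\delta)}$ in TV, and since $\WW(q,q_{\sigma(\delta)}) \lesssim \sqrt{\delta}(\Cnorm+\sqrt d)$, taking $\delta = \poly(\epsilon/(d\Cnorm))$ keeps this term small.
\end{enumerate}

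The main obstacle is converting the TV guarantee for $q_{\sigma(\delta)}$ into a $\WW$ guarantee for $q$. A TV error $\eta$ does not control $\WW$ when the output has unbounded tails.

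This is where the projection comes in. Let $\Pi$ be the projection onto $\CB_{d,2}(\Cnorm)$. Since $\Pi$ is $1$-Lipschitz and $\Pi_\# q = q$, we get $\WW(q, \Pi_\#\qtil) \le \WW(q,\qtil)$ for any $\qtil$. Both $\Pi_\# q_{\sigma(\delta)}$ and the projected output are supported in a ball of radius $\Cnorm$. Coupling them optimally in TV therefore gives a $\WW$ bound of $2\Cnorm\sqrt{\eta}$. Adding $\WW(q,\Pi_\#q_{\sigma(\delta)})\le\WW(q,q_{\sigma(\delta)})$ and choosing $\eta = \epsilon^2/(16\Cnorm^2)$ yields $\WW \le \epsilon$. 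The step count is $\poly(d,\Cnorm,1/\epsilon)$, which gives the stated runtime, and the support property holds by construction.
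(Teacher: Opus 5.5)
Your proposal is correct and takes the same route as the paper. The paper simply invokes the early-stopped reverse-diffusion guarantee of \cite{chen2023sampling} and then projects onto $\CB_{d,2}(\Cnorm)$, using that this projection is $1$-Lipschitz and fixes $q$. You go further by unpacking the cited result and reusing the same projection to turn the TV bound against $q_{\sigma(\delta)}$ into a $\WW$ bound via a bounded-support coupling, which is a valid, self-contained way to obtain the $\WW$ guarantee the paper takes directly from the citation.
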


Under stronger assumptions (e.g. Lipschitzness of the scores), the approximation in Wasserstein distance can be upgraded to approximation in total variation \citep{chen2023sampling}. However, in this work we focus on the minimal assumptions described above, and seek to approximately sample from $\pstar$ in Wasserstein.

\section{Steering with linear rewards is tractable}\label{sec:linear}
\allowdisplaybreaks
In this section we prove \cref{lemma:linearalg-analysis}, which states that if $\rr$ is a linear function, there is an efficient approximate sampler $\linearalg$ (\cref{alg:linear-tilt}) for $\pstar$. The following definition will be convenient:%samples from \emph{linear} tilts can be efficiently generated.

\begin{definition}\label{def:linear-tilt}
Fix $v \in \RR^d$. We define $p(\cdot;v) \in \Delta(\RR^d)$ by $p(x;v) \propto p(x) e^{\langle x, v\rangle}.$%\dhruv{switch to this notation throughout}
\end{definition}

\begin{theorem}\label{lemma:linearalg-analysis}
Suppose that \cref{ass:score} holds. Let $v \in \RR^d$ and $\epsilon > 0$. The output $\xtil \gets \linearalg((s_\sigma)_\sigma, v, \epsilon,\Cnorm)$ has law $\phat(\cdot;v)$ satisfying $\WW(\phat(\cdot;v),p(\cdot;v)) \leq \epsilon$ and $\supp(\phat(\cdot;v))\subseteq \CB_{d,2}(\Cnorm)$. Moreover, the time complexity of the algorithm is at most $\poly(d,\epsilon^{-1},\Cnorm)$.
\end{theorem}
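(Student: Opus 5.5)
The plan is to show that the noised scores of the tilted distribution $q := p(\cdot;v)$ can be computed exactly from the base score oracle at a shifted point. We then invoke \cref{thm:ccl} with $q$ in place of $p$. Since $\supp(q)\subseteq \supp(p)\subseteq \CB_{d,2}(\Cnorm)$ by \cref{ass:score}, the support hypothesis of \cref{thm:ccl} holds automatically. So once we can query $\grad\log q_\sigma(x)$ in $\poly(d)$ time per query, $\linearalg$ is just $\basealg$ run on these simulated scores. The Wasserstein guarantee, the support guarantee and the $\poly(d,\epsilon^{-1},\Cnorm)$ runtime are then inherited directly.

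The key computation goes as follows. Write $\alpha=\sqrt{1-\sigma^2}$. Then
\[q_\sigma(y)\propto \int p(x)e^{\langle x,v\rangle}\exp\!\Big(-\tfrac{\norm{y-\alpha x}^2}{2\sigma^2}\Big)\,dx.\]
Complete the square in $y$. We have
\[-\tfrac{\norm{y-\alpha x}^2}{2\sigma^2}+\langle x,v\rangle = -\tfrac{\norm{y+c v-\alpha x}^2}{2\sigma^2} + \tfrac{\langle y,v\rangle}{\alpha}\cdot(\text{const}) + \text{const},\]
with $c$ chosen so that the cross term $\tfrac{\alpha\langle x, c v\rangle}{\sigma^2}$ matches $\langle x,v\rangle$, i.e.\ $c=\sigma^2/\alpha$. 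Expanding gives
\[\norm{y+cv-\alpha x}^2=\norm{y-\alpha x}^2+2c\langle y,v\rangle-2c\alpha\langle x,v\rangle+c^2\norm v^2.\]
Hence
\[q_\sigma(y)\propto p_\sigma\!\big(y+\tfrac{\sigma^2}{\alpha}v\big)\exp\!\big(\tfrac{\langle y,v\rangle}{\alpha}\big),\]
where the proportionality constant is independent of $y$. Taking gradients,
\[\grad\log q_\sigma(y)=s_\sigma\!\big(y+\tfrac{\sigma^2}{\sqrt{1-\sigma^2}}v\big)+\tfrac{v}{\sqrt{1-\sigma^2}}.\]
This is a single oracle call plus $O(d)$ arithmetic, valid for every $\sigma\in(0,1)$.

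Let $\linearalg$ run $\basealg$ with parameters $\epsilon,\Cnorm$, answering each score query $(\sigma,y)$ by the formula above. The main point requiring care is the calculation itself, and in particular checking that the normalizing constants, including $\EE_p[e^{\langle x,v\rangle}]<\infty$, are independent of $y$, so that they vanish under $\grad\log$. Finiteness holds because $p$ has bounded support. Beyond this, there is no real obstacle: exactness of the simulated oracle means \cref{thm:ccl} applies verbatim. If one worries about the blow-up of $1/\sqrt{1-\sigma^2}$ as $\sigma\to1$, note that \cref{thm:ccl} only requires query access, and its runtime bound does not depend on the size of the scores, so this does not affect the stated complexity.
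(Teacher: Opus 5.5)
Your proposal is correct and follows essentially the same route as the paper. The paper's key lemma (\cref{lemma:pstar-sigma}) is exactly your completing-the-square computation, giving $\grad \log p_\sigma(x;v) = \frac{v}{\sqrt{1-\sigma^2}} + s_\sigma\bigl(x + \frac{\sigma^2}{\sqrt{1-\sigma^2}} v\bigr)$. The theorem then follows by running $\basealg$ (\cref{thm:ccl}) on these exactly simulated scores and noting that $\supp(p(\cdot;v)) \subseteq \supp(p) \subseteq \CB_{d,2}(\Cnorm)$.
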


\begin{algorithm}[t]
\caption{$\linearalg$: Steering diffusion model with linear reward}
\label{alg:linear-tilt}
\begin{algorithmic}[1]
  \State \textbf{input:} Score functions $(s_\sigma)_{\sigma \in (0,1)}$, tilt vector $v \in \RR^d$, error tolerance $\epsilon>0$, norm bound $\Cnorm \geq 1$.
  \State For each $\sigma \in (0,1)$, define $\sstar_\sigma: \RR^d \to \RR^d$ by
  \[\sstar_\sigma(x) := \frac{v}{\sqrt{1-\sigma^2}} + s_\sigma\left(x + \frac{\sigma^2}{\sqrt{1-\sigma^2}} v\right).\]
  \State Compute $\xtil \gets \basealg((\sstar_\sigma)_\sigma,\epsilon,\Cnorm)$.\Comment{\cref{thm:ccl}}
  \State \textbf{return:} $\xtil$.
\end{algorithmic}
\end{algorithm}

The key lemma facilitating this result is the following: 

\begin{lemma}\label{lemma:pstar-sigma}
For any $x \in \RR^d$ and $\sigma \geq 0$, it holds that
\[\grad \log p_\sigma(x;v) = \frac{v}{\sqrt{1-\sigma^2}} + \grad \log p_\sigma\left(x+\frac{\sigma^2}{\sqrt{1-\sigma^2}} v\right)\]
\end{lemma}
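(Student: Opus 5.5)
The plan is to write the density of $p_\sigma(\cdot;v)$ explicitly as a Gaussian convolution and complete the square. Let $Z_v := \EE_{y\sim p}[e^{\langle y,v\rangle}]$, which is finite by \cref{ass:score}. Write $a := \sqrt{1-\sigma^2}$. By definition of the noised distribution, $p_\sigma(\cdot;v)$ is the law of $aY+\sigma G$ with $Y\sim p(\cdot;v)$ and $G\sim\normal(0,I_d)$. For $\sigma\in(0,1)$ this gives
\[p_\sigma(x;v) = \frac{1}{Z_v}\int p(y)\, e^{\langle y,v\rangle}\,(2\pi\sigma^2)^{-d/2}\exp\!\left(-\frac{\norm{x-ay}_2^2}{2\sigma^2}\right)\dd y .\]
The case $\sigma=0$ is immediate: both sides reduce to $v+\grad\log p(x)$. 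The convention $p_\sigma$ for $\sigma\ge 1$ is not needed; the identity is only used for $\sigma\in(0,1)$.

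The key step is to absorb the linear factor $e^{\langle y,v\rangle}$ into the Gaussian kernel by shifting $x$. We look for a shift $c$ and a correction function $g(x)$ such that
\[\langle y,v\rangle-\frac{\norm{x-ay}_2^2}{2\sigma^2} = -\frac{\norm{x+c-ay}_2^2}{2\sigma^2}+g(x).\]
Expanding both sides, the terms involving $y$ must match: $\langle y,v\rangle + a\langle x,y\rangle/\sigma^2 = a\langle x+c,y\rangle/\sigma^2$. This forces $c=\sigma^2 v/a = \frac{\sigma^2}{\sqrt{1-\sigma^2}}v$. The remaining terms give $g(x) = \frac{\norm{x+c}_2^2-\norm{x}_2^2}{2\sigma^2} = \frac{\langle x,c\rangle}{\sigma^2}+\frac{\norm{c}_2^2}{2\sigma^2}$, so that $\grad g(x) = c/\sigma^2 = v/a$.

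Substituting this back, we get $p_\sigma(x;v) = Z_v^{-1}e^{g(x)}\,p_\sigma(x+c)$, where the integral is recognized as the unnormalized-free density $p_\sigma$ evaluated at the shifted point. Taking $\grad\log$ kills the constant $Z_v$, and yields exactly $\frac{v}{\sqrt{1-\sigma^2}}+\grad\log p_\sigma\!\left(x+\frac{\sigma^2}{\sqrt{1-\sigma^2}}v\right)$.

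The only mild obstacle is justifying that the densities are smooth and positive so that the logarithmic gradients make sense. This holds because $p_\sigma$ is a Gaussian convolution of a compactly supported measure for $\sigma\in(0,1)$, so it is $C^\infty$ and strictly positive. The main work is simply getting the completing-the-square algebra right.
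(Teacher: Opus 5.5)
Your proposal is correct and follows essentially the same route as the paper: write $p_\sigma(\cdot;v)$ as a Gaussian convolution, complete the square to absorb $e^{\langle y,v\rangle}$ into a kernel shifted by $\frac{\sigma^2}{\sqrt{1-\sigma^2}}v$, and take $\nabla\log$ of the resulting identity $p_\sigma(x;v)\propto e^{\langle x,v\rangle/\sqrt{1-\sigma^2}}\,p_\sigma\bigl(x+\frac{\sigma^2}{\sqrt{1-\sigma^2}}v\bigr)$. The differences are cosmetic. You complete the square in the original sample variable rather than after the paper's change of variables to the scaled point. You also justify smoothness and positivity of $p_\sigma$ and sensibly restrict to $\sigma\in[0,1)$.
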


\begin{proof}
We explicitly compute the density $p_\sigma(x;v)$. Set $t := \sqrt{1-\sigma^2}$. Then:
\begin{align*}
p_\sigma(x;v)
&= \frac{t^{-d}}{(2\pi\sigma^2)^{d/2}} \int_{\RR^d} p(t^{-1}y;v) e^{-\norm{y - x}_2^2 / (2\sigma^2)} \, \dd y \\ 
&\propto \int_{\RR^d} p(t^{-1}y) e^{\langle t^{-1}y, v\rangle -\norm{y - x}_2^2 / (2\sigma^2)} \, \dd y \\ 
&= e^{\langle t^{-1} x,v\rangle} \int_{\RR^d} p(t^{-1}y) e^{\langle t^{-1}(y-x), v\rangle -\norm{y - x}_2^2 / (2\sigma^2)} \, \dd y \\ 
&= e^{\langle t^{-1} x,v\rangle}\int_{\RR^d} p(t^{-1}y) e^{-\norm{y - x - t^{-1}\sigma^2 v}_2^2 / (2\sigma^2)} \, \dd y \\
&= e^{\langle t^{-1} x,v\rangle} p_\sigma(x+t^{-1}\sigma^2 v)
\end{align*}
where the fourth equality is by completing the square. It follows that
\[\grad \log p_\sigma(x;v) = \frac{v}{\sqrt{1-\sigma^2}} + \grad \log p_\sigma\left(x+\frac{\sigma^2}{\sqrt{1-\sigma^2}} v\right)\]
as claimed.
\end{proof}

The implication of this lemma is that using the score oracle for $p$, we can efficiently simulate a score oracle for $\pstar := p(\cdot;v)$. We can plug this result into \cref{thm:ccl} (due to \cite{chen2023sampling}), which states that for any distribution $q$ with bounded support, given query access to $\grad \log q_\sigma$ for any $\sigma \in (0,1)$, there is an efficient algorithm that approximately samples from $q$. The proof of \cref{lemma:linearalg-analysis} is essentially immediate:

\begin{namedproof}{\cref{lemma:linearalg-analysis}}
By \cref{lemma:pstar-sigma}, the functions $\sstar_\sigma$ defined in \cref{alg:linear-tilt} satisfy $\sstar_\sigma(x) = \grad \log p_\sigma(x;v)$ for all $x\in\RR^d$ and $\sigma\in(0,1)$. Since $\supp(p) \subseteq \CB_{d,2}(\Cnorm)$, it is immediate that $\supp(p(\cdot;v)) \subseteq \CB_{d,2}(\Cnorm)$. The claim then follows from \cref{thm:ccl}.
\end{namedproof}

\iffalse

\begin{corollary}
Let $L,M,K \in \NN$ and $\epsilon>0$. Suppose that $x \mapsto \grad \log p_\sigma(x)$ is $L$-Lipschitz for all $\sigma \geq 0$, that $\EE_{\pstar}[\norm{x}_2^2] \leq M$, and that $\Dkl{\pstar}{\normal(0,I_d)} \leq K$. There is a $\poly(d, L, M, 1/\epsilon, \log(K/\epsilon))$-time algorithm that, in the above access model, produces a sample from distribution $\qtil$ with $\Dtv{\pstar}{\qtil} \leq \epsilon$.
\end{corollary}

\begin{proof}
We apply \cref{thm:ccl} with $q := \pstar$, using \cref{lemma:pstar-sigma} and the assumed access to $v$ and $\grad \log p_\sigma$ to simulate queries to $\grad \log \pstar_\sigma$. It is straightforward to check that the conditions of \cref{thm:ccl} are met.
\end{proof}
\fi

\section{Steering with low-rank negative-definite rewards is hard}\label{sec:nsd}

In this section, we show that the problem of steering a diffusion model with a low-rank negative-definite reward---specifically, $\rr(x) = x^\t A x$, where $A$ is a negative semi-definite matrix of rank $1$---is computationally intractable, assuming a standard conjecture ($\NP \not\subseteq \BPP$) from computational complexity. This refines a result of  \cite{bruna2024provable} (who showed this claim for general negative semi-definite matrices $A$) and combines elements of their analysis with a result of \cite{koehler2022sampling} (who showed that sampling from rank-$1$ Ising models is computationally intractable).
%Negative semi-definite rewards naturally appear in many tasks---for example, posterior sampling with linear measurements. 
%We show that the problem is intractable even with the strong additional restriction that $A$ is rank-$1$:%, the problem is still intractable unless $\NP \subseteq \BPP$:

%The result we show is as follows: 

\begin{theorem}\label{thm:nsd-hardness}
Suppose that there is a randomized algorithm $\MA$ with the following property. For any integer $d \in \NN$, any distribution $p \in \Delta(\RR^d)$ satisfying \cref{ass:score} with parameter $\Cnorm \geq 1$, and any rank-$1$, negative semi-definite $A \in \RR^{d\times d}$, the output $\xtil \in \RR^d$ of $\MA((\grad \log p_\sigma)_{\sigma\in(0,1)}, A, \Cnorm)$ has law $\nu$ satisfying $\WW(\nu, \pstar) \leq 1/4$, where $\pstar \in \Delta(\RR^d)$ is the tilted distribution \[\pstar(x) \propto p(x) \exp(x^\t A x).\] Moreover, the time complexity of $\MA((\grad \log p_\sigma)_{\sigma\in(0,1)}, w, \Cnorm)$ is $\poly(d,\Cnorm)$. 

Then, $\NP \subseteq \BPP$. 
\end{theorem}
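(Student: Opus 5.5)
Our plan is to reduce from an NP-hard problem for which rank-$1$ structure suffices. The natural candidate, following \cite{koehler2022sampling}, is \textsc{Subset Sum} (or \textsc{Partition}): given positive integers $a_1,\dots,a_n$ and a target $T$, decide whether some $x\in\{0,1\}^n$ has $\langle a,x\rangle = T$. Equivalently, with $\pm1$ variables, decide whether some $x\in\{\pm1\}^n$ has $\langle w, x\rangle = 0$. We take the base distribution $p$ to be uniform over the hypercube $\{\pm1\}^n$, suitably scaled, so that \cref{ass:score} holds with $\Cnorm=\sqrt n$. The score $\grad\log p_\sigma$ of a noised product of Rademachers has a closed form, coordinatewise $\frac{\sqrt{1-\sigma^2}}{\sigma^2}\tanh\!\big(\tfrac{\sqrt{1-\sigma^2}\,x_i}{\sigma^2}\big) - \frac{x_i}{\sigma^2}$. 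Hence the score oracle can be simulated exactly in polynomial time. We then set the rank-$1$ negative semi-definite tilt $A = -\lambda ww^\t$, so that
\[
\pstar(x)\propto \exp\big(-\lambda\langle w,x\rangle^2\big)\quad\text{on }\{\pm1\}^n.
\]

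The key steps are as follows. First, if a solution with $\langle w,x\rangle=0$ exists, then every non-solution has $|\langle w,x\rangle|\ge 1$ by integrality, so it is penalized by a factor at least $e^{-\lambda}$. Taking $\lambda = \Theta(n)$ therefore makes $\pstar$ place mass $1-2^{n}e^{-\lambda}\ge 1-e^{-n}$ on solutions. The entries of $A$ are then of size $\lambda\norm{w}_\infty^2$, which is exponentially large in bit-length. This matches the paper's caveat, since the runtime is only required to be $\poly(d,\Cnorm)$.

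Second, we use $\MA$ to obtain samples $\xtil$ with $\WW(\nu,\pstar)\le 1/4$. We round each coordinate of $\xtil$ to its sign, and then check whether the rounded $x$ satisfies $\langle w,x\rangle=0$. Hypercube points are at pairwise distance at least $2$, so rounding is correct whenever $\norm{\xtil - x}_2<1$ for the coupled point $x\sim\pstar$. Markov's inequality applied to the $\WW$ coupling gives $\Pr[\norm{\xtil-x}_2\ge 1]\le \WW^2 \le 1/16$. So with probability at least $1-1/16-e^{-n}$ the rounded sample is a solution, and it can be verified directly. If no solution exists, verification never accepts. Repeating the procedure amplifies the success probability, which gives a BPP algorithm for an NP-complete problem.

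The main obstacle is making this rigorous within the exact oracle model. The algorithm $\MA$ is only guaranteed to work for the specific $A$ we supply, and it may query $\grad\log p_\sigma$ at arbitrary points and noise levels. We must argue that our simulation of these queries is exact, or at least polynomial-time computable to the precision the algorithm's model requires; the closed-form $\tanh$ expression handles this. A secondary point is that the $\WW$ guarantee is in absolute units, independent of $\lambda$. Scaling the hypercube up by a constant factor would make the $1/4$ tolerance comfortably below half the minimal separation. We would compensate by rescaling $A$, leaving $\Cnorm$ polynomial.
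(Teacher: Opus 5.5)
Your proposal is correct and follows the paper's proof essentially step for step: a uniform base measure on $\{\pm1\}^d$ with an exactly simulable score, the rank-$1$ tilt $A=-\Theta(d)\,ww^\top$ built from a PARTITION instance so that solutions carry all but an exponentially small fraction of the mass, sign rounding, and a Markov bound on an optimal $W_2$ coupling (the paper's \cref{lem:rounding-mass}), which together give one-sided error with success probability at least $1-1/16-o(1)$. Your closing remark about rescaling the hypercube is unnecessary, since the bound $W_2^2\le 1/16$ already suffices.
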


As a caveat, \cref{thm:nsd-hardness} does not rule out an algorithm with time complexity that also scales polynomially in $\norm{A}_2$. We will prove \cref{thm:nsd-hardness} by reducing from the $\NP$-hard PARTITION problem \citep{karp1975computational}, defined as follows: 

\begin{definition}[PARTITION]
Given integers $w := (a_1,\dots,a_d) \in \mathbb{Z}^d$, the PARTITION problem is to decide whether there exists
$x \in \{\pm 1\}^d$ such that $w^\top x = 0.$
\end{definition}

Given an instance $w \in \ZZ^d$ of the partition problem, we will define a base distribution $p \in \Delta(\RR^d)$ and matrix $A_w \in \RR^{d\times d}$ as follows:

\[
p := \Unif(\{-1,1\}^d) = 2^{-d}\sum_{x\in\{\pm1\}^d}\delta_x \in \Delta(\RR^d),  
\qquad
A_w := -(d+5)\,ww^\t \preceq 0.
\]
For notational convenience, we then define $q_w \in \Delta(\RR^d)$ to be the tilted distribution
\[q_w(x) \propto p(x) \exp(x^\t A_w x).\]
Note that since $p_\sigma$ is a product distribution for each $\sigma\in(0,1)$, and each marginal is a mixture of two Gaussians, the score oracle for $p$ can be simulated efficiently. Moreover, intuitively, the tilted distribution $q_w$ will be concentrated on $x \in \{-1,1\}^d$ with $\langle x,w\rangle \approx 0$. Thus, if $\nu$ is close to $q_w$ in Wasserstein distance, then $\nu$ will be concentrated near such $x$.

The following lemma helps formalize this intuition by lower bounding the mass of the tilted distribution on solutions to the PARTITION problem, in the event that the tilt corresponds to a YES instance of the PARTITION problem. We show:  

\begin{lemma}
\label{lem:mass-on-solutions}
Given a PARTITION instance $w\in\mathbb{Z}^d$, define the set $S_w := \{x\in\{\pm1\}^d : w^\top x=0\}$. Assume $S_w\neq \emptyset$ (i.e.\ the PARTITION instance is a YES instance). Then $q_w(S_w) \;\ge\; \frac{200}{201}.$
\end{lemma}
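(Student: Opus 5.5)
Since $p$ is uniform on the hypercube, $q_w$ is a discrete distribution on $\{\pm1\}^d$ with $q_w(x) \propto \exp(-(d+5)\langle w,x\rangle^2)$. I will compare the mass on $S_w$ with the mass on its complement. Every $x \in S_w$ has weight exactly $e^0 = 1$, and $|S_w| \geq 1$ by assumption, so the total unnormalized mass on $S_w$ is at least $1$.

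For $x \notin S_w$, integrality of $w$ and $x$ gives $\langle w,x\rangle \in \ZZ\setminus\{0\}$, so $\langle w,x\rangle^2 \geq 1$. Each such $x$ therefore has weight at most $e^{-(d+5)}$. There are at most $2^d$ such points, so the unnormalized mass off $S_w$ is at most $2^d e^{-(d+5)} = (2/e)^d e^{-5} \leq e^{-5}$.

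Putting these together,
\[ q_w(S_w) \geq \frac{1}{1+e^{-5}} = 1 - \frac{e^{-5}}{1+e^{-5}}. \]
Since $e^{5} > 148$, we have $e^{-5} < 1/148 < 1/200$, and hence $q_w(S_w) \geq 1/(1+1/200) = 200/201$.

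The argument is elementary. The one step that needs care is using integrality to get the gap $\langle w,x\rangle^2 \geq 1$ off $S_w$; the factor $d+5$ in $A_w$ is chosen so that this gap beats the $2^d$ count and leaves the $e^{-5}$ slack needed for the constant.
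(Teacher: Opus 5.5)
Your decomposition (weight exactly $1$ on $S_w$; weight at most $e^{-(d+5)}$ off $S_w$ because $\langle w,x\rangle\in\ZZ\setminus\{0\}$) matches the paper's. However, your last numerical step is false. The inequality $1/148<1/200$ does not hold, and in fact $e^{-5}\approx 0.00674>1/200=0.005$. Using only $|S_w|\ge 1$ and $(2/e)^d e^{-5}\le e^{-5}$, you get $q_w(S_w)\ge 1/(1+e^{-5})\approx 0.9933$. This is strictly below $200/201\approx 0.9950$, so the lemma as stated is not yet proved.

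The missing idea, which is exactly what the paper uses, is a symmetry of $S_w$. It is closed under $x\mapsto -x$ (since $w^\top(-x)=-w^\top x$), and $x\neq -x$ on the hypercube, so $|S_w|\ge 2$. Then
\[
q_w(S_w)\;\ge\;\frac{|S_w|}{|S_w|+2^d e^{-(d+5)}}\;\ge\;\frac{2}{2+e^{-5}}\;\ge\;\frac{2}{2+1/100}\;=\;\frac{200}{201},
\]
which needs only the true inequality $e^{-5}<1/100$.

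Alternatively, you could keep one factor of $2/e$ instead of discarding $(2/e)^d$. For $d\ge 1$ this gives $(2/e)^d e^{-5}\le 2e^{-6}\approx 0.00496<1/200$, which also rescues your single-point argument, though only barely.

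Your weaker constant $0.9933$ would still suffice for the downstream hardness reduction, since $0.9933-(1/4)^2>0.9$. But it does not establish the stated bound $200/201$.
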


See \cref{sec:mass-on-solutions} for the proof. With this lemma in hand, we can prove \cref{thm:nsd-hardness}: 

\begin{namedproof}{\cref{thm:nsd-hardness}}
Given a PARTITION instance $w\in\mathbb{Z}^n$, construct $p$ and $A_w$ as above,
and consider the corresponding tilted distribution $\pstar = q_w$ supported on $\{\pm1\}^d$. Set $\Cnorm := \sqrt{d}$ and note that \cref{ass:score} is satisfied. Define the set $S_w := \{x\in\{\pm1\}^n : w^\top x=0\}$ and define
\[
R_{S_w} := \{y\in\RR^n : \sgn(y)\in S_w\}.
\]
For $\sigma\in(0,1)$, let $s_\sigma := \grad \log p_\sigma$, and note that we can efficiently simulate queries to $s_\sigma$ for any $x \in \RR^d$. To solve the PARTITION problem, we will run the sampler $\mathcal{A}((s_\sigma)_{\sigma\in(0,1)},A_w,\Cnorm)$ to obtain $Y\in\RR^n$. We will then round the output to compute $\widehat x := \sgn(Y)\in\{\pm1\}^n$. We will output YES iff $w^\top \widehat x = 0$. %Note that since the prior is just a mixture over the vertices of the hypercube, we can efficiently simulate the responses for the score oracle---so the entire procedure runs in polynomial time.  

We analyze what happens in the YES and NO cases.

If the PARTITION instance is NO, then $S_w=\emptyset$.
In that case, for every $\widehat x\in\{\pm1\}^n$, we have $w^\top \widehat x \neq 0$,
so the above algorithm always outputs NO.

In the YES case, we have \ $S_w\neq\emptyset$. Let $\nu_w$ be the output distribution of $Y$.
By assumption, $\WW(\nu_w,q_w)\leq 1/4$.
Applying \cref{lem:rounding-mass} with $\mu:=q_w$, $\nu:=\nu_w$, and $S:=S_w$, we obtain
\[
\nu_w(R_{S_w}) \;\ge\; q_w(S_w) - (1/4)^2.
\]
By \cref{lem:mass-on-solutions}, $q_w(S_w)\ge \frac{200}{201}$. Plugging in these numbers, we get $
\nu_w(R_{S_w}) > 0.9.$ Thus, this algorithm decides PARTITION with
one-sided error: it always outputs NO on NO instances, and outputs YES on YES instances
with probability at least $0.9$. Moreover, by assumption on $\MA$, the time complexity of this algorithm is $\poly(d)$. It follows that $\NP \subseteq \BPP$.
\end{namedproof}

\section{Steering with low-rank positive-definite rewards is tractable}\label{sec:psd}

In this section, we show that the problem of steering a diffusion model with low-rank \emph{positive-definite} rewards, i.e. with $\rr(x) = x^\t A x$ for positive semi-definite low-rank $A$, is computationally tractable. Without loss of generality, we may write $A = \frac{1}{2} L^\t L$ where $L$ is an $r \times d$ matrix. Then the tilted distribution $\pstar$ is as defined below:%We will show that for any $r = O(1)$, there is a polynomial-time approximate sampler for $\pstar$.

\begin{definition}
Fix $p \in \Delta(\RR^d)$ satisfying \cref{ass:score} with parameter $\Cnorm$. Fix a matrix $L \in \RR^{r \times d}$. We define $\pstar \in \Delta(\RR^d)$ by
\[\pstar(x) := \frac{p(x) e^{\frac{1}{2}\norm{Lx}_2^2}}{Z}\]
where $Z := \EE_{x \sim p}[e^{\frac{1}{2}\norm{Lx}_2^2}]$ is the normalization constant.
%\ar{Maybe say subject to Z being finite}
\end{definition}

Note that $Z$ is finite by \cref{ass:score}, and hence $\pstar$ is well-defined. The main result of this section is the following theorem, which shows that $\psdalg$ (\cref{alg:psd-tilt}) samples from $\pstar$ in polynomial time whenever $r = O(1)$ (and $\Cnorm$ and $\norm{L}_2$ are polynomially bounded):

\begin{theorem}\label{theorem:psd}
Suppose that \cref{ass:score} holds. Let $D\geq 1$ and $\epfinal \in (0,1/2)$. Suppose that $D \geq \sup_{x \in \supp(p)} \norm{Lx}_2$. Then the output $\xtil$ of $\psdalg((s_\sigma)_\sigma, L, D, \Cnorm, \epfinal)$ has law $\mu$ satisfying $\WW(\mu, \pstar) \leq \epfinal$. Moreover, the time complexity of the algorithm is at most \[\poly(d, \Cnorm, \norm{L}_2, D^r, r^r, \epfinal^{-r}).\]
\end{theorem}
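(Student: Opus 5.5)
The plan is to use the Hubbard--Stratonovich transform to lift $\pstar$ to a joint distribution on $(x,y)\in\RR^d\times\RR^r$. Define
\[\pi(x,y) \propto p(x)\, e^{\langle Lx, y\rangle - \frac12\norm{y}_2^2}.\]
Integrating out $y$ gives $\int e^{\langle Lx,y\rangle - \frac12\norm{y}_2^2}\,\dd y = (2\pi)^{r/2} e^{\frac12\norm{Lx}_2^2}$, so the $x$-marginal of $\pi$ is exactly $\pstar$. Conditionally on $y$, the law of $x$ is the linear tilt $p(\cdot\,;L^\t y)$ from \cref{def:linear-tilt}. The $y$-marginal has density
\[\pi(y)\propto e^{-\frac12\norm{y}_2^2}\, \EE_{x\sim p}\bigl[e^{\langle Lx,y\rangle}\bigr] =: e^{-\frac12\norm{y}_2^2} F(y).\]
The algorithm will therefore (i) approximately sample $y$ from the $r$-dimensional distribution $\pi(y)$, and (ii) output $\xtil \gets \linearalg((s_\sigma)_\sigma, L^\t y, \epsilon, \Cnorm)$. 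By \cref{lemma:linearalg-analysis}, step (ii) costs $\poly(d,\epsilon^{-1},\Cnorm)$ and contributes Wasserstein error $\epsilon$.

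For step (i) we need to evaluate $F(y)$ up to a multiplicative factor, where $F(y) = \EE_p[e^{\langle Lx,y\rangle}]$. We cannot do this by naive Monte Carlo over $p$, since the exponent can be as large as $D\norm{y}$. Instead I would use a telescoping (annealing) estimator along the segment from $0$ to $y$. For $0=t_0<\dots<t_K=1$,
\[\frac{F(t_{k+1}y)}{F(t_k y)} = \EE_{x\sim p(\cdot;L^\t t_k y)}\bigl[e^{(t_{k+1}-t_k)\langle Lx, y\rangle}\bigr].\]
With step size $(t_{k+1}-t_k)\norm{y}D \le 1$, each factor is an expectation of a quantity bounded in $[e^{-1},e]$. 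Each factor can then be estimated to relative accuracy using samples from $\linearalg$, with care that the Wasserstein error propagates only mildly because $x\mapsto e^{\langle Lx, \cdot\rangle}$ is Lipschitz on the ball. This is $\partitionalg$.

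The $y$-marginal concentrates on $\norm{y}_2 \lesssim D + \sqrt{r}+\sqrt{\log(1/\epsilon)}$, because $\log F(y) \le D\norm{y}$ against a Gaussian factor $e^{-\frac12\norm{y}^2}$. So I would truncate to a box of side $O(D+\sqrt{r\log(1/\epsilon)})$, lay down a grid of mesh $\delta \approx \epsilon/(\norm{L}_2\Cnorm)$, estimate the unnormalized mass at each grid point, and sample a grid point proportionally. The grid has roughly $(D/\delta)^r$ points, which is the source of the $D^r,r^r,\epfinal^{-r}$ factors in the runtime bound. The map $y\mapsto p(\cdot;L^\t y)$ is Lipschitz in Wasserstein: its tilt covariance is bounded by $\Cnorm^2$, giving a constant of roughly $\Cnorm^2\norm{L}_2$. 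Hence rounding $y$ to the grid and making multiplicative errors in the weights translate into a $\WW$ error of the $x$-output controlled by the chosen $\epsilon,\delta$.

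The main obstacle will be the error analysis of step (i). We must show that multiplicative errors $(1\pm\eta)$ in the estimates of $F$ at grid points yield a TV-close discrete $y$-distribution, and that discretization does not distort the density too much. The density $e^{-\frac12\norm y^2}F(y)$ has log-gradient bounded by $\norm{y}+D$, so on cells of width $\delta \ll 1/(D+\norm y)$ it is nearly constant. Combining TV error on $y$ with the bounded diameter $2\Cnorm$ gives a $\WW$ bound, using $\WW\le 2\Cnorm\sqrt{\TV}$, and the final coupling argument adds the three error sources. The analysis must also control failure probabilities of the estimators via a union bound over the grid.
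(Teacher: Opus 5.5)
Your architecture matches the paper's: the Hubbard--Stratonovich lift, a grid in the auxiliary variable on a ball of radius about $D+\sqrt r+\sqrt{\log(1/\epsilon)}$, weights $F(y)=Z(y)$ estimated by telescoping with $\linearalg$ samples (this is $\partitionalg$), one final call to $\linearalg$, and errors combined through $\TV$ and the diameter of $\CB_{d,2}(\Cnorm)$.

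The step that handles rounding $y$ to the grid is wrong as stated, because $y\mapsto p(\cdot;L^\t y)$ is \emph{not} Lipschitz in $\WW$. A covariance bound only controls how fast the mean moves. Take $p=\frac12(\delta_{-\Cnorm e_1}+\delta_{\Cnorm e_1})$ and $L=e_1^\t$. The tilt puts mass $a(y)=(1+e^{-2\Cnorm y})^{-1}$ on $\Cnorm e_1$, so $\WW(p(\cdot;L^\t y),p(\cdot;L^\t y'))=2\Cnorm\sqrt{|a(y)-a(y')|}\approx \Cnorm\sqrt{2\Cnorm|y-y'|}$ near $0$. This is only H\"older-$1/2$.

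The paper never needs continuity of the conditionals in $\WW$. It compares the unnormalized discretized mixture $\sum_{z\in\cS}Z(z)e^{-\frac12\norm{z}_2^2}p(x;L^\t z)$ with the continuous one in $L^1$. For this it uses the pointwise bound $e^{-2D\norm{z-z'}_2}\le p(x;L^\t z)/p(x;L^\t z')\le e^{2D\norm{z-z'}_2}$ on $\supp(p)$, which follows from $|\langle Lx,z-z'\rangle|\le D\norm{z-z'}_2$. It combines this with the analogous control of the weights, which is your log-gradient observation. The result is a $\TV$ bound on the $x$-marginal (\cref{lemma:discretization-error}), converted to $\WW$ only once at the end. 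This keeps the mesh governed by $D$ and the target $\TV$ accuracy. Your mesh $\delta\approx\epsilon/(\norm{L}_2\Cnorm)$ rests on the false claim, and it also makes the grid size exponential in $r$ through $\norm{L}_2$, which the stated bound does not allow.

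The telescoping estimator is also mis-parametrized. The factor $e^{(t_{k+1}-t_k)\langle Lx,y\rangle}$ lies in $[e^{-1},e]$ only for $x\in\supp(p)$. The samples you average come from $\linearalg$, whose law is only guaranteed to be supported in $\CB_{d,2}(\Cnorm)$. There, with $(t_{k+1}-t_k)\norm{y}_2 D=1$, the factor can be as large as $e^{\Cnorm\norm{L}_2/D}$, and its Lipschitz constant is of the same exponential order. When $\Cnorm\norm{L}_2\gg D$, this breaks both the Hoeffding step and the transfer of the $\WW$ error of $\linearalg$ to the expectation.

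The paper instead takes $N=\Cnorm\norm{v}_2$ steps with $v=L^\t z$, so that $\langle x,v/N\rangle\in[-1,1]$ on the whole ball. Clipping $\langle Lx,y\rangle$ to $[-D\norm{y}_2,D\norm{y}_2]$ before exponentiating would also work. With these two repairs, your argument goes through along the same lines as the paper's.
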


The key insight is the following decomposition, leveraging the Hubbard-Stratonovich \citep{hubbard1959calculation} transform:

\begin{definition}
For each $z \in \RR^r$, define $Z(z) := \int_{\RR^d} p(x) e^{\langle Lx,z\rangle}\,\dd x$.
\end{definition}
As before, $Z(z)$ is finite by \cref{ass:score}. %\ar{Again, say this is finite?}

\begin{lemma}\label{lemma:hs-transform}
It holds for all $x \in \RR^d$ that
\[\pstar(x) = \frac{(2\pi)^{-d/2}}{Z} \int_{\RR^r} Z(z) e^{-\frac{1}{2}\norm{z}_2^2} p(x;L^\t z) \, \dd z.\]
\end{lemma}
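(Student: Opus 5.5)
The plan is to unfold the definition of the linear tilt $p(\cdot;L^\t z)$ from \cref{def:linear-tilt}, so that the factor $Z(z)$ cancels and only a Gaussian integral in $z$ remains. Completing the square in that integral is exactly the Hubbard--Stratonovich identity.

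First, for each $z\in\RR^r$, \cref{def:linear-tilt} with $v = L^\t z$ gives
\[
p(x;L^\t z) = \frac{p(x)\, e^{\langle x, L^\t z\rangle}}{\int_{\RR^d} p(y) e^{\langle y, L^\t z\rangle}\,\dd y} = \frac{p(x)\, e^{\langle Lx, z\rangle}}{Z(z)},
\]
since $\langle x, L^\t z\rangle = \langle Lx, z\rangle$. The denominator is finite and positive by \cref{ass:score}. Multiplying by $Z(z)$ gives the pointwise identity $Z(z)\,p(x;L^\t z) = p(x)\,e^{\langle Lx,z\rangle}$, so the integrand on the right-hand side becomes $p(x)\,e^{\langle Lx,z\rangle - \frac12\norm{z}_2^2}$.

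Second, I will integrate over $z\in\RR^r$ with $x$ fixed. Completing the square gives
\[
\langle Lx,z\rangle - \tfrac12\norm{z}_2^2 = \tfrac12\norm{Lx}_2^2 - \tfrac12\norm{z - Lx}_2^2,
\]
and the standard Gaussian integral in $r$ dimensions then yields
\[
\int_{\RR^r} e^{\langle Lx,z\rangle - \frac12\norm{z}_2^2}\,\dd z = (2\pi)^{r/2} e^{\frac12\norm{Lx}_2^2}.
\]
The integrand is nonnegative, so exchanging the $p(x)$ factor with the $z$-integral needs no Fubini justification; it is simply a constant in $z$. Hence
\[
\int_{\RR^r} Z(z) e^{-\frac12\norm{z}_2^2} p(x;L^\t z)\,\dd z = (2\pi)^{r/2}\, p(x)\, e^{\frac12\norm{Lx}_2^2} = (2\pi)^{r/2} Z\,\pstar(x).
\]
Dividing by $(2\pi)^{r/2}Z$ gives the claimed representation.

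There is no real obstacle here; the only care needed is the normalizing constant. The Gaussian integral is over $\RR^r$, so the constant comes out as $(2\pi)^{-r/2}$. I will therefore prove the identity with prefactor $(2\pi)^{-r/2}/Z$ and note that the $d$ in the statement is presumably meant to be $r$. As a sanity check, integrating both sides over $x$ gives $1$: this uses $\int p(x;L^\t z)\,\dd x = 1$ together with $\int Z(z)e^{-\frac12\norm{z}_2^2}\,\dd z = (2\pi)^{r/2}Z$, which follows from the same computation after Fubini (valid by nonnegativity).

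This check also exhibits the lifted joint density $\propto Z(z)e^{-\frac12\norm{z}_2^2}\,p(x;L^\t z)$ used later. Its $z$-marginal has density proportional to $Z(z)e^{-\frac12\norm{z}_2^2}$, and its conditional law of $x$ given $z$ is the linear tilt $p(\cdot;L^\t z)$.
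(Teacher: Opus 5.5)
Your proposal is correct and is essentially the paper's argument: the $r$-dimensional Gaussian (Hubbard--Stratonovich) identity combined with the rewriting $Z(z)\,p(x;L^\t z) = p(x)e^{\langle Lx,z\rangle}$, with the two steps done in the opposite order. You are also right that the prefactor should be $(2\pi)^{-r/2}$; the paper's statement and proof both contain this $d$-for-$r$ typo, which is harmless downstream because later arguments only use the proportionality $\pstar(x)\propto\int_{\RR^r} Z(z)e^{-\frac12\norm{z}_2^2}p(x;L^\t z)\,\dd z$.
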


\begin{proof}
The Hubbard-Stratonovich transform gives
\[e^{\frac{1}{2}\norm{Lx}_2^2} = (2\pi)^{-d/2} \int_z e^{-\frac{1}{2}\norm{z}_2^2 + \langle Lx,z\rangle} \,\dd z.\]
It follows that
\begin{align*}
\pstar(x) 
&= \frac{(2\pi)^{-d/2} \int_{\RR^r} e^{-\frac{1}{2}\norm{z}_2^2 + \langle Lx,z\rangle} p(x) \,\dd z}{Z} \\ 
&= \frac{(2\pi)^{-d/2} \int_{\RR^r} Z(z) e^{-\frac{1}{2}\norm{z}_2^2} p(x;L^\t z) \,\dd z}{Z}
\end{align*}
as claimed.
\end{proof}

The above decomposition implies that $\pstar$ is the marginal distribution of $x$ under the following lifted distribution in $\RR^{d+r}$: 
\[q(x,z) \propto Z(z) e^{-\frac{1}{2}\norm{z}_2^2} p(x;L^\t z).\]
For any fixed $z$, the conditional distribution $q(x\mid{}z)$ is precisely $p(x;L^\t z)$, which we can efficiently sample from using \cref{alg:linear-tilt} from \cref{sec:linear}. Thus, it suffices to (approximately) sample from the marginal distribution over $z$, which is precisely $q(z) \propto Z(z) e^{-\frac{1}{2}\norm{z}_2^2}$. This is where we exploit the low-rank assumption: since $z$ is $r$-dimensional, and $p(x;L^\t z)$ satisfies appropriate smoothness in $z$, it suffices to explicitly estimate the densities $q(z)$ for $z$ on a grid of cardinality (roughly) $\exp(r)$. We accomplish this using $\partitionalg$ (\cref{alg:partition-estimator}), which approximates $Z(z)$ (given the vector $v = L^\t z$) by telescoping Monte Carlo approximations. See \cref{alg:psd-tilt} for the pseudocode.

\begin{remark}
A natural alternative approach to sample from $q(x,z)$ would be Gibbs sampling: i.e. alternately sample from $q(x\mid{}z)$ and $q(z\mid{}x)$, using that both conditional distributions are tractable. However, examples can be constructed in which the Markov chains does not mix rapidly. In particular, if $p$ is a mixture of two Gaussians defined as $p = \frac{1}{2} \mathcal{N}(-u, \sigma^2 I) + \frac{1}{2} \mathcal{N}(u, \sigma^2 I)$, and we tilt with a quadratic reward $\rr(x) = \lambda uu^{\top}$, it can be seen that $q(z|x) = \mathcal{N}(\sqrt{2 \lambda} u^{\top} x, 1)(z)$ and $q(x|z) \propto p(x) \exp(\sqrt{2 \lambda} z u^{\top} x)$. So, if $z$ is positive, then $q(x|z)$ is biased towards the $+u$ mode, and $q(z|x)$ has positive mean $\approx \sqrt{2 \lambda}\|u\|^2$ so it will remain positive. Thus, the tilt induces a metastability of the Gibbs sampler. 
%\ar{Perhaps add a sentence that Gibbs sampling is a natural strategy, in which we alternate b/w $x$ and $z$ but this doesn't work because it can get stuck in a ``mode'' of $p$}
\end{remark}

\iffalse
\begin{algorithm}[p]
\caption{$\basealg$: Sampling from diffusion model via SDE \cite{chen2023sampling}}
\label{alg:base-sampler}
\begin{algorithmic}[1]
  \State \textbf{input:} Score functions $(s_\sigma)_{\sigma \geq 0}$, error tolerance $\epsilon>0$.
  \State \dhruv{\cite{chen2023sampling}}
\end{algorithmic}
\end{algorithm}
\fi

\begin{algorithm}[t]
\caption{$\psdalg$: Sampling from diffusion model with PSD quadratic tilt}
\label{alg:psd-tilt}
\begin{algorithmic}[1]
  \State \textbf{input:} Score functions $(s_\sigma)_{\sigma \in(0,1)}$, reward matrix $L \in \RR^{r\times d}$, norm bounds $D \geq 1$ and $\Cnorm \geq 1$, final error tolerance $\epfinal > 0$.
  \State Set $R := D+2\sqrt{r}+2\sqrt{\log(54/\epfinal)}$, $\gamma := \epfinal/(54D)$.
  \State Define $\cS := \gamma \ZZ^r \cap \CB_{r,2}(R) \subseteq \RR^r$.
  \State Set $N := \Cnorm R\norm{L}_2$, $\epsilon_1 := \epfinal^2/(72\Cnorm)$, $\epsilon_2 := \epfinal/3$, $\delta_1 := \epfinal^2/(72\Cnorm|\cS|)$.
  \For{$z \in \cS$}
    \State $\Zhat(z) \gets \partitionalg((s_\sigma)_\sigma, L^\t z, \epsilon_1, \delta_1, \Cnorm)$. \Comment{\cref{alg:partition-estimator}}
  \EndFor
  \State Set $\Zhat := \sum_{z \in \cS} \Zhat(z) \exp(-\frac{1}{2}\norm{z}_2^2)$ and define $\phat_z(z) := \Zhat(z) \exp(-\frac{1}{2}\norm{z}_2^2)/\Zhat$.
  \State Sample $\ztil \sim \phat_z$.
  \State Sample $\xtil \gets \linearalg((s_\sigma)_\sigma, L^\t \ztil, \epsilon_2, \Cnorm).$ \Comment{\cref{alg:linear-tilt}}
  \State \textbf{return:} $\xtil$.
\end{algorithmic}
\end{algorithm}

\begin{algorithm}[t]
\caption{$\partitionalg$: Estimate normalization for linear tilt}
\label{alg:partition-estimator}
\begin{algorithmic}[1]
  \State \textbf{input:} Score functions $(s_\sigma)_{\sigma \in(0,1)}$, tilt vector $v \in \RR^d$, error tolerance $\epsilon>0$, failure probability $\delta \in (0,1/2)$, norm bound $\Cnorm \geq 1$.
  \State Set $N := \Cnorm\norm{v}_2$, $\epsilon' := \frac{\epsilon}{2(1+e)eN}$, and $M := e^{-2}\log(2N/\delta) / (\epsilon')^2$.
  \For{$1 \leq n \leq N$}
        \For{$1 \leq m \leq M$}
            \State $x\ind{m} \gets \linearalg((s_\sigma)_\sigma, \frac{(n-1)}{N}v, \epsilon',\Cnorm)$. \Comment{\cref{alg:linear-tilt}}
            %\State $y\ind{m} \gets \linearalg((s_\sigma)_\sigma, \frac{n}{N}v, \epsilon')$
        \EndFor
        \State Set $\hat\kappa(n) := \frac{1}{M}\sum_{m=1}^M \exp(\langle x\ind{m}, \frac{1}{N}v\rangle)$.
        %\State Set $\hat\beta := \frac{1}{M}\sum_{m=1}^M \exp(-\langle y\ind{m}, \frac{1}{N}v\rangle\indic[|\langle y\ind{m},v\rangle| \leq N])$.
        %\State Set
        %\[\hat\kappa(n) := \begin{cases} \hat\alpha \text{ if } \hat\alpha \geq 1/2 \\ 
        %\hat\beta^{-1} \text{ otherwise } \end{cases}.\]
  \EndFor
  \State \textbf{return:} $\hat\kappa := \prod_{n=1}^N \hat\kappa(n).$
\end{algorithmic}
\end{algorithm}

In \cref{sec:partitionalg}, we analyze the subroutine $\partitionalg$. In \cref{sec:psdalg} we complete the proof of \cref{theorem:psd}.

\subsection{Analysis of $\partitionalg$}\label{sec:partitionalg}

Given $z \in \RR^r$, a first attempt at estimating $Z(z)$ would be Monte Carlo estimation using samples from $p$. However, this could take exponential time in the problem parameters. Instead, we observe that for any sufficiently close $z,z' \in \RR^r$, it is possible to efficiently estimate $Z(z)/Z(z')$ using samples from $p(\cdot;L^\t z')$---which we can (approximately) obtain using $\linearalg$. Thus, we can estimate $Z(z)$ by telescoping along a path from $Z(0) = 1$. This idea is formalized in \cref{alg:partition-estimator} and analyzed in \cref{lemma:partitionalg-analysis} below. 

\begin{lemma}\label{lemma:partitionalg-analysis}
Let $v \in \RR^d$, $\epsilon,\delta \in (0, 1/2)$, and $N \in \NN$. Suppose that $N \geq \Cnorm\norm{v}_2$. Then the output $\hat\kappa \gets \partitionalg((s_\sigma)_\sigma, v, \epsilon,\delta,\Cnorm)$ satisfies
\begin{equation}\frac{\hat\kappa}{\EE_{x\sim p}[\exp(\langle x,v\rangle)]} \in [1-\epsilon, 1+\epsilon]\label{eq:kappa-acc}\end{equation}
with probability at least $1-\delta$. Moreover, the time complexity of the algorithm is at most \[\poly(d,\epsilon^{-1}, \Cnorm, \norm{v}_2, \log(1/\delta)).\]
\end{lemma}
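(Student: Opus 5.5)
The plan is to write the target quantity as a telescoping product and bound the error of each factor separately. Write $Z(v) := \EE_{x\sim p}[\exp(\langle x,v\rangle)]$ and $v_n := \frac{n}{N}v$. Since $Z(v_0) = 1$,
\[Z(v) = \prod_{n=1}^N \frac{Z(v_n)}{Z(v_{n-1})}, \qquad \kappa(n) := \frac{Z(v_n)}{Z(v_{n-1})} = \EE_{x\sim p(\cdot;v_{n-1})}\left[\exp\left(\langle x, v/N\rangle\right)\right].\]
The estimator $\hat\kappa(n)$ is the empirical mean of $f_n(x) := \exp(\langle x, v/N\rangle)$ over $M$ samples from $\phat(\cdot;v_{n-1})$. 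These samples come from $\linearalg$, and by \cref{lemma:linearalg-analysis} they are supported in $\CB_{d,2}(\Cnorm)$ and are $\epsilon'$-close to $p(\cdot;v_{n-1})$ in $\WW$.

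The key fact is that $N \geq \Cnorm\norm{v}_2$ gives $|\langle x, v/N\rangle| \leq 1$ on $\CB_{d,2}(\Cnorm)$. Hence $f_n \in [e^{-1}, e]$ on the relevant support, and in particular $\kappa(n) \geq e^{-1}$.

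Each factor has two sources of error.

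\textbf{Bias.} On the ball, $f_n$ is Lipschitz with constant at most $e\norm{v}_2/N \leq e/\Cnorm \leq e$. Both distributions live on the ball, so we can take a $\WW$-optimal coupling and apply Cauchy–Schwarz ($\WW$ dominates $\mathsf W_1$). This gives
\[\left|\EE_{\phat(\cdot;v_{n-1})} f_n - \kappa(n)\right| \leq e\,\epsilon'.\]

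\textbf{Variance.} The samples are independent and bounded in $[e^{-1},e]$. Hoeffding's inequality with the stated $M$ makes the empirical mean within $O(\epsilon')$ of its expectation with probability $1-\delta/N$; I would check that the constants in $M$ give exactly this. A union bound over $n$ then gives all events simultaneously with probability $1-\delta$.

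Combining the two and dividing by $\kappa(n) \geq e^{-1}$, each ratio satisfies $\hat\kappa(n)/\kappa(n) \in [1-\eta, 1+\eta]$ with $\eta \leq (1+e)e\,\epsilon' = \epsilon/(2N)$. Multiplying the $N$ factors, the full ratio lies in $[(1-\epsilon/(2N))^N, (1+\epsilon/(2N))^N]$. For $\epsilon<1/2$ we have $(1+\epsilon/(2N))^N \leq e^{\epsilon/2} \leq 1+\epsilon$ and $(1-\epsilon/(2N))^N \geq 1-\epsilon/2$, which gives \eqref{eq:kappa-acc}.

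Runtime: the algorithm makes $NM = \poly(\Cnorm,\norm{v}_2,\epsilon^{-1},\log(1/\delta))$ calls to $\linearalg$, each costing $\poly(d,1/\epsilon',\Cnorm)$. Since $N$ is polynomial, this is polynomial overall. (If $N$ is a non-integer it should be rounded up, and if $v=0$ the claim is trivial.)

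The main obstacle is the bias term. The sampler only guarantees Wasserstein closeness, not total variation, so we need the test function to be Lipschitz with a controlled constant. This is exactly where the step size $1/N$ and the support bound on $\phat$ from \cref{lemma:linearalg-analysis} are essential. Beyond that, the work is routine bookkeeping of constants so that the per-step error $\epsilon/(2N)$ compounds to at most $\epsilon$.
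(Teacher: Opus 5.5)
Your proposal follows the paper's proof essentially step for step: telescoping $Z(v)$ over $N$ increments, Hoeffding for the sampling error, and the $e$-Lipschitz bound on $x\mapsto \exp(\langle x,v/N\rangle)$ over $\CB_{d,2}(\Cnorm)$ combined with $\WW$-closeness for the bias, then dividing by $\kappa(n)\geq e^{-1}$ and compounding the per-step relative error $(1+e)e\epsilon'\le \epsilon/(2N)$. On the constant you deferred: with $M = e^{-2}\log(2N/\delta)/(\epsilon')^2$ as literally written, Hoeffding does not give deviation $\epsilon'$, because summands in $[a,b]$ require $M \geq \frac{(b-a)^2}{2}\log(2N/\delta)/(\epsilon')^2$, i.e.\ a leading constant of order $e^{2}$ rather than $e^{-2}$; the paper's own proof has the same typo, and fixing it changes only constants, not the argument or the polynomial runtime.
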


\begin{proof}
Fix $1 \leq n \leq N$. Set $u := \frac{(n-1)}{N} v$ and $w := \frac{n}{N} v$. Let $\phat(\cdot;u)$ denote the law of \[\linearalg((s_\sigma)_\sigma,u,\epsilon',\Cnorm).\] By \cref{lemma:linearalg-analysis}, we have $\WW(\phat(\cdot;u),p(\cdot;u)) \leq \epsilon'$ and $\supp(\phat(\cdot;u)) \subseteq \CB_{d,2}(\Cnorm)$. %Let $q_u$ denote the law of $X\indic[|\langle X,v\rangle| \leq N]$ under $X \sim \phat(\cdot;u)$.

%By assumption, $\Pr_{X \sim p(\cdot;u)}[\norm{v}_2 + |\langle X,v\rangle| > N] = 0$ since $\supp(p(\cdot;u)) \subseteq \supp(p)$. Therefore $\Pr_{X \sim \phat(\cdot;u)}[|\langle X,v\rangle|> N] \leq \sqrt{\epsilon'}$. It follows from the definition of $q_u$ that $\TV(q_u, \phat(\cdot;u)) \leq \epsilon'$ and hence $\TV(q_u,p(\cdot;u)) \leq 2\epsilon'$.

Define $\xi\ind{m} := \exp(\langle x\ind{m}, \frac{1}{N}v\rangle)$ for $1 \leq m \leq M$. Observe that $\xi\ind{1},\dots,\xi\ind{M}$ are i.i.d. random variables with $\xi\ind{m} \in [0, e]$ almost surely (by assumption on $N$, and the fact that $\phat(\cdot;u)$ is supported on $\CB_{d,2}(\Cnorm)$), and $\EE[\xi\ind{m}] = \EE_{x \sim \phat(\cdot;u)}[\exp(\langle x,\frac{1}{N}v\rangle)]$. It follows from Hoeffding's inequality and choice of $M$ that with probability at least $1-\delta/N$,
\[\left|\hat\kappa(n) - \EE_{x \sim \phat(\cdot;u)}[\exp(\langle x,\frac{1}{N}v\rangle)]\right| \leq \epsilon'.\]
Moreover, since $x \mapsto e^x$ is $e$-Lipschitz in $(-\infty,1]$, we can bound
\begin{align*}
\left|\EE_{x \sim \phat(\cdot;u)}[\exp(\langle x,\frac{1}{N}v\rangle)] - \EE_{x \sim p(\cdot;u)}[\exp(\langle x,\frac{1}{N}v\rangle)]\right|
&\leq \frac{e \WW(\phat(\cdot;u),p(\cdot;u)) \norm{v}_2}{N} \\ 
&\leq e\epsilon'.
\end{align*}
Combining the preceding bounds gives
\begin{align*} 
\left|\hat\kappa(n)- \frac{\EE_{x \sim p}[\exp(\langle x,\frac{n}{N}v\rangle)]}{\EE_{x \sim p}[\exp(\langle x,\frac{n-1}{N}v\rangle)]}\right| 
&= \left|\hat\kappa(n) - \EE_{x \sim p(\cdot;u)}[\exp(\langle x,\frac{1}{N}v\rangle)]\right|  \\ 
%&\leq \epsilon' + e \cdot \TV(q_u,p(\cdot;u))  \\ 
&\leq (1+e)\epsilon'.
\end{align*}
Since
\[\frac{\EE_{x \sim p}[\exp(\langle x,\frac{n}{N}v\rangle)]}{\EE_{x \sim p}[\exp(\langle x,\frac{n-1}{N}v\rangle)]} = \EE_{x \sim p(\cdot;u)}[\exp(\langle x,\frac{1}{N}v\rangle)] \geq 1/e,\]
in the above event we have
\[\left|\frac{\hat\kappa(n) \cdot \EE_{x \sim p}[\exp(\langle x,\frac{n-1}{N}v\rangle)]}{\EE_{x \sim p}[\exp(\langle x,\frac{n}{N}v\rangle)]} - 1\right| \leq (1+e)\epsilon' \cdot \frac{\EE_{x \sim p}[\exp(\langle x,\frac{n-1}{N}v\rangle)]}{\EE_{x \sim p}[\exp(\langle x,\frac{n}{N}v\rangle)]} \leq (1+e)e\epsilon'.\]
By the union bound, we have with probability at least $1-\delta$ that
\begin{align*}
\hat\kappa 
&= \prod_{n=1}^N \hat\kappa(n) \\ 
&\leq \prod_{n=1}^N (1 + (1+e)e\epsilon') \frac{\EE_{x \sim p}[\exp(\langle x,\frac{n}{N}v\rangle)]}{\EE_{x \sim p}[\exp(\langle x,\frac{n-1}{N}v\rangle)]} \\ 
&\leq \exp((1+e)e\epsilon' N) \prod_{n=1}^N \frac{\EE_{x \sim p}[\exp(\langle x,\frac{n}{N}v\rangle)]}{\EE_{x \sim p}[\exp(\langle x,\frac{n-1}{N}v\rangle)]} \\ 
&\leq (1+\epsilon) \EE_{x \sim p}[\exp(\langle x,v\rangle)]
\end{align*}
so long as $\epsilon' \leq \frac{\epsilon}{2(1+e)eN}$, and similarly 
\begin{align*}
\hat\kappa 
&\geq \prod_{n=1}^N (1 - (1+e)e\epsilon') \frac{\EE_{x \sim p}[\exp(\langle x,\frac{n}{N}v\rangle)]}{\EE_{x \sim p}[\exp(\langle x,\frac{n-1}{N}v\rangle)]} \\ 
&\geq \exp(-2(1+e)e\epsilon' N) \prod_{n=1}^N \frac{\EE_{x \sim p}[\exp(\langle x,\frac{n}{N}v\rangle)]}{\EE_{x \sim p}[\exp(\langle x,\frac{n-1}{N}v\rangle)]} \\ 
&\geq (1-\epsilon) \EE_{x \sim p}[\exp(\langle x,v\rangle)],
\end{align*}
which completes the proof of \cref{eq:kappa-acc}. The time complexity of $\partitionalg$ is dominated by $MN$ calls to $\linearalg$. Thus, the claimed time complexity bound follows from \cref{lemma:linearalg-analysis} and choice of $N,M$.
\end{proof}

\subsection{Analysis of $\psdalg$}\label{sec:psdalg}

With the analysis of $\partitionalg$ in hand, the proof of \cref{theorem:psd} is straightforward. There are three types of errors in $\psdalg$ to handle: (1) error from estimation of the normalization constants $Z(z)$, which is bounded using \cref{lemma:partitionalg-analysis}; (2) error in sampling from $p(\cdot;\ztil)$, which is bounded using \cref{lemma:linearalg-analysis}, and (3) error due to discretization of the Hubbard-Stratonovich transform, for which the bound is deferred to \cref{lemma:discretization-error}. Below, we accumulate these errors to complete the analysis of $\psdalg$ and prove \cref{theorem:psd}.

\begin{namedproof}{\cref{theorem:psd}}
For each $y \in \RR^d$ let $\phat(\cdot;y)$ denote the law of $\linearalg((s_\sigma)_\sigma, y, \epsilon_2,\Cnorm)$. 
For the purposes of the analysis, we define distributions $q_1, q_2 \in \Delta(\RR^d)$ by 
\[q_1(x) = \frac{1}{\sum_{z\in\cS} Z(z) e^{-\frac{1}{2}\norm{z}_2^2}} \sum_{z\in\cS} Z(z) e^{-\frac{1}{2}\norm{z}_2^2} \phat(x;L^\t z)\]
and
\[q_2(x) = \frac{1}{\sum_{z\in\cS} Z(z) e^{-\frac{1}{2}\norm{z}_2^2}} \sum_{z\in\cS} Z(z) e^{-\frac{1}{2}\norm{z}_2^2} p(x;L^\t z)\]
We will decompose
\begin{align}
\WW(\mu,\pstar) 
&\leq \WW(\mu,q_1) + \WW(q_1,q_2) + \WW(q_2, \pstar) \nonumber \\ 
&\leq \Cnorm \sqrt{\TV(\mu,q_1)} + \WW(q_1,q_2) + \Cnorm\sqrt{\TV(q_2,\pstar)} \label{eq:w2-decomp}
\end{align}
where the second inequality uses the fact that all of the above distributions are supported on $\CB_{d,2}(\Cnorm)$. We start by bounding $\TV(\mu,q_1)$. Applying \cref{lemma:partitionalg-analysis} and a union bound over $z \in \cS$, we get that in an event $\cE$ with probability at least $1-\delta_1|\cS|$, for all $z \in \cS$, 
\[\frac{\Zhat(z)}{Z(z)} = \frac{\Zhat(z)}{\EE_{x\sim p}[\exp(\langle x,L^\t z\rangle)]} \in [1-\epsilon_1,1+\epsilon_1].\]
Condition on $(\Zhat(z):z\in\cS)$ and suppose that event $\cE$ holds. Let $\nu(\cdot\mid{}\Zhat) \in \Delta(\RR^d)$ denote the conditional law of the output $\xtil$. Then $\nu(\cdot\mid{}\Zhat)$ has density
\[\nu(x\mid{}\Zhat) = \sum_{z \in \cS} \phat_z(z) \phat(x;L^\t z) = \frac{1}{\Zhat} \sum_{z\in\cS} \Zhat(z) e^{-\frac{1}{2}\norm{z}_2^2} \phat(x;L^\t z).\] Define $f(x) = \sum_{z\in\cS} Z(z) e^{-\frac{1}{2}\norm{z}_2^2} \phat(x;L^\t z)$ and $g(x) = \sum_{z\in\cS} \Zhat(z) e^{-\frac{1}{2}\norm{z}_2^2} \phat(x;L^\t z)$. Then
\begin{align*}
\int_x |f(x) - g(x)|\, \dd x 
&= \int_x \sum_{z\in\cS} |Z(z) - \Zhat(z)| e^{-\frac{1}{2}\norm{z}_2^2} \phat(x;L^\t z) \, \dd x \\ 
&\leq \epsilon_1 \int_x \sum_{z\in\cS} Z(z) e^{-\frac{1}{2}\norm{z}_2^2} \phat(x;L^\t z) \, \dd x \\ 
&= \epsilon_1 \int_x f(x) \, \dd x.
\end{align*}
Thus, since $\nu(x\mid{}\Zhat) \propto g(x)$ and $q_1(x) \propto f(x)$, \cref{lemma:tv-unnormalized-densities} implies that $\TV(\nu(\cdot\mid{}\Zhat),q_1) \leq 2\epsilon_1$. Since this bound holds for all $\Zhat\in\cE$, and since $\Pr[\cE] \geq 1-\delta_1|\cS|$, we get \[\TV(\mu,q_1) = \TV(\EE[\nu(\cdot\mid{}\Zhat)],q_1) \leq 2\epsilon_1 + \delta_1|\cS|.\]
Next, we bound $\WW(q_1, q_2)$. By \cref{lemma:linearalg-analysis} it holds that $\WW(\phat(\cdot;L^\t z),p(\cdot;L^\t z)) \leq \epsilon_2$ for all $z \in \RR^r$. Let $p_z \in \Delta(\RR^r)$ be defined by $p_z(z) \propto Z(z) e^{-\frac{1}{2}\norm{z}_2^2}$. Then $q_1$ is the density of the random variable $X$ obtained by sampling $z \sim p_z$ and $X \sim \phat(\cdot;L^\t z)$. Moreover, $q_2$ is the density of the random variable $Y$ obtained by sampling $z \sim p_z$ and $Y \sim p(\cdot;L^\t z)$. Since for any fixed $z$, there is a coupling of $X$ and $Y$ (in the event that $z$ is realized) with $\EE[\norm{X-Y}_2^2 \mid{} z] \leq \epsilon_2^2$, it follows that there is a coupling of $X$ and $Y$ with $\EE[\norm{X-Y}_2^2] \leq \epsilon_2^2$. Thus, we have $\WW(q_1,q_2) \leq \epsilon_2$.

Finally, we apply \cref{lemma:discretization-error} with parameter $\epsilon := \epfinal^2/(162\Cnorm)$. By choice of parameters $R$ and $\gamma$, we get that $\TV(q_2, \pstar) \leq 18\epsilon = \epfinal^2/(9\Cnorm)$. We conclude from \cref{eq:w2-decomp} that
\[\TV(\mu,\pstar) \leq \Cnorm\sqrt{2\epsilon_1 + \delta_1|\cS|} + \epsilon_2 + \Cnorm\sqrt{\epfinal^2/(9\Cnorm)} \leq \epfinal\]
by choice of $\epsilon_1$, $\epsilon_2$, and $\delta_1$. Finally, the time complexity of $\psdalg$ is dominated by $|\cS|$ calls to $\partitionalg$ and one call to $\linearalg$. Note that $|\cS| \leq (2R/\gamma)^r \leq \poly(D^r, r^r, \epfinal^{-r})$. The claimed time complexity bound therefore follows from \cref{lemma:linearalg-analysis,lemma:partitionalg-analysis}.
\end{namedproof}

\section{Conclusion} 

In this paper, we considered the task of sampling from a diffusion model, tilted by a quadratic reward. We provide a fine-grained analysis of the computational tractability of this task through the lens of the rank of the quadratic form. In particular, this task is computationally intractable even for rank-1 negative-definite tilts. For low-rank positive-definite tilts, we give an efficient algorithm based on two conceptually new algorithmic ingredients: sampling from linearly tilted diffusion models, and the Hubbard-Stratonovich transform. 

There are many natural directions for further work. Our results, like many other results in the literature, assume that the score oracle is exact---whereas in practice, the oracle we have access to is trained, and thus would have errors. Handling errors is non-trivial because they will be small on average under the base distribution---but algorithms which sample from the tilted distribution may deviate substantially from the trajectories that would arise when sampling from the base distribution. 

Towards handling more complex rewards, it would also be interesting to understand what interaction between the structure of the reward and the structure of the base distribution allow for efficient algorithms. 

\arxiv{
\paragraph{Acknowledgments.} AM is supported in part by a Microsoft Trustworthy AI Grant, NSF award CCF-2430381, ONR grant N00014-22-1-2339, and a David and Lucile Packard Fellowship. AR is supported in part by NSF awards IIS-2211907, CCF-2238523, IIS-2403275, an Amazon Research
Award, ONR award N000142512124, a Google Research Scholar Award, and an OpenAI Superalignment Fast Grant. DR is supported by NSF awards CCF-2430381 and DMS-2022448, and ONR grant N00014-22-1-2339.
}

% Acknowledgments---Will not appear in anonymized version
\colt{
\acks{We thank a bunch of people and funding agency.}
}

\arxiv{
\bibliographystyle{alpha}
}
\bibliography{bib}

\appendix
\crefalias{section}{appendix}

\section{Steering with general positive-definite rewards is hard}\label{sec:psd-high-rank}

In this section we prove the following hardness result, which shows that the low-rank assumption in \cref{theorem:psd} cannot be removed.

\begin{theorem}\label{thm:psd-general-hardness}
Suppose that there is a randomized algorithm $\MA$ with the following property. For any integer $d \in \NN$, any distribution $p \in \Delta(\RR^d)$ satisfying \cref{ass:score} with parameter $\Cnorm \geq 1$, and any positive semi-definite $A \in \RR^{d\times d}$, the output $\xtil \in \RR^d$ of $\MA((\grad \log p_\sigma)_{\sigma\in(0,1)}, A, \Cnorm)$ has law $\nu$ satisfying $\WW(\nu, \pstar) \leq 1/4$, where $\pstar \in \Delta(\RR^d)$ is the tilted distribution \[\pstar(x) \propto p(x) \exp(x^\t A x).\] Moreover, the time complexity of $\MA((\grad \log p_\sigma)_{\sigma\in(0,1)}, w, \Cnorm)$ is $\poly(d,\Cnorm,\norm{A}_2)$. 

Then, $\NP \subseteq \BPP$. 
\end{theorem}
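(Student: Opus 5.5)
Our plan is to reduce from a known $\NP$-hard problem for Ising models with positive semi-definite interaction matrices, mirroring the structure of the proof of \cref{thm:nsd-hardness}. Take $p := \Unif(\{-1,1\}^d)$, so that $\Cnorm = \sqrt{d}$ and the score oracle for $p_\sigma$ can be simulated exactly, since $p_\sigma$ is a product of two-component Gaussian mixtures. Then $\pstar(x) \propto \exp(x^\t A x)$ on $\{\pm1\}^d$ is a ferromagnetic-or-not Ising model. The important point is that the allowed runtime is $\poly(d,\Cnorm,\norm{A}_2)$, so we must use $A$ with only polynomially large entries. This means we need a hardness source in which a polynomial scaling of the quadratic form already concentrates the Gibbs measure on optimizers.

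We would use MAX-CUT. Given a graph $G$ on $d$ vertices with Laplacian $\mathcal{L}_G \succeq 0$, we have $x^\t \mathcal{L}_G x = 4\,\mathrm{cut}(x)$ for $x \in \{\pm1\}^d$. Set $A := \beta \mathcal{L}_G$ with $\beta := c\, d$ for a suitable constant $c$. This $A$ is PSD with $\norm{A}_2 \le 2\beta d = \poly(d)$. Deciding whether $\max_x \mathrm{cut}(x) \ge k$ is $\NP$-hard. Since cut values are integers, every $x$ with a non-maximal cut has weight at most $e^{-4\beta}$ times that of a maximizer. There are at most $2^d$ such configurations, so with $\beta \ge d$ the total mass of non-maximizers is at most $2^d e^{-4d} \le 1/200$. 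This gives an analogue of \cref{lem:mass-on-solutions}: $\pstar(S) \ge 200/201$, where $S$ is the set of maximum cuts.

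The algorithm then runs $\MA$, rounds the output to $\hat x := \sgn(Y)$, computes $\mathrm{cut}(\hat x)$, and answers YES iff $\mathrm{cut}(\hat x) \ge k$. On a NO instance no $\hat x$ achieves a cut of size $k$, so the answer is always NO. On a YES instance, every point of $S$ achieves a cut of size at least $k$. Applying \cref{lem:rounding-mass} with $\mu := \pstar$ and $\WW(\nu,\pstar) \le 1/4$ gives $\nu(R_S) \ge 200/201 - 1/16 > 0.9$. This yields a one-sided-error $\BPP$ algorithm for an $\NP$-complete problem, hence $\NP \subseteq \BPP$.

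The main step to check is the concentration bound. It relies on integrality of the cut values and on choosing $\beta$ linear in $d$, which keeps $\norm{A}_2$ polynomial. The rounding step is inherited directly from the proof of \cref{thm:nsd-hardness}.
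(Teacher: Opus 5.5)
Your proposal is correct and follows the paper's proof essentially exactly. Both reduce from MAX-CUT with $A$ a multiple $\beta \approx d$ of the graph Laplacian, use integrality of cut values to concentrate $\pstar$ on maximum cuts, and round the sampler's output via \cref{lem:rounding-mass}. The only difference is cosmetic: you encode cuts in $\{\pm1\}^d$ with sign rounding, whereas the paper uses $\{0,1\}^d$ with threshold $1/2$. Your encoding lets you apply \cref{lem:rounding-mass} verbatim; in the $\{0,1\}$ encoding a misrounded coordinate only costs $1/4$ in squared distance, so the lemma has to be rescaled.
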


We prove \cref{thm:psd-general-hardness} by reduction from the $\NP$-hard problem MAX-CUT:

\begin{definition}[MAX-CUT]
Given a graph $G = ([d],E)$ and an integer $k$, the MAX-CUT problem is to determine whether there is a set $S \subseteq [d]$ such that $c_G(S) \geq k$, where
\[c_G(S) := |\{(u,v) \in E: (u \in S \text{ and } v \not \in S) \text{ or } (u \not\in S \text{ and } v \in S) \}|.\]
\end{definition}

The proof is analogous to that of \cref{thm:nsd-hardness}: given an instance of MAX-CUT, we construct a tilting problem so that $\pstar$ puts most of its mass on solutions to the instance.

\begin{namedproof}{\cref{thm:psd-general-hardness}}
We give an algorithm for MAX-CUT using $\MA$ as a subroutine. Fix an instance of MAX-CUT, which is described by a graph $G = ([d],E)$ and integer $k$. Define $p := \Unif(\{0,1\}^d)$ and $\beta := d + 100$ and $\Cnorm=\sqrt{d}$ and
\[A := \beta \left(\sum_{(u,v) \in E} (e_{uu} + e_{vv} - e_{uv} - e_{vu})\right) \in \RR^{d \times d}\]
where $e_{ij}$ is the $d \times d$ matrix with a $1$ in entry $(i,j)$ and $0$ everywhere else. We sample $\xtil \in \RR^d$ from $\MA((s_\sigma)_{\sigma\in(0,1)},A,\Cnorm)$, where $s_\sigma = \grad \log p_\sigma$, using the fact that $p_\sigma$ is a product distribution where each marginal is a mixture of two Gaussians, and hence the score $\grad \log p_\sigma(x)$ can be explicitly evaluated for any $x$. We then compute $\hat{x}\in\{0,1\}^d$ defined by $\hat{x}_i := \indic[\xtil_i \geq 1/2]$ for each $i \in [d]$. We return YES if $\beta^{-1} \hat{x}^\t A \hat{x} \geq k$ and NO otherwise.

\paragraph{Analysis.} Observe that for any $x \in \RR^d$,
\[\beta^{-1} x^\t A x = \sum_{(u,v) \in E} (x_u^2 + x_v^2 - 2x_u x_v) = \sum_{(u,v) \in E} (x_u - x_v)^2 \geq 0\]
and thus $A$ is positive semi-definite. Moreover, if $x = \indic[S]$ for some $S \subset [d]$, then we have
\[\beta^{-1} x^\t A x = c_G(S).\]
Define $\pstar \in \Delta(\RR^d)$ by
\[\pstar(x) \propto p(x) \exp(x^\t A x).\]
Let $U \subset \{-1,1\}^d$ be the set of $x$ such that $\beta^{-1} x^\t A x = \max_{y \in \{-1,1\}^d} \beta^{-1} y^\t A y =: c_G$. Then \[\sum_{x\in U} \exp(x^\t A x) \geq \exp(\beta c_G).\] Moreover, for any $x \in \{-1,1\}^d \setminus U$, we have $\beta^{-1} x^\t A x \leq c_G - 1$, so \[\sum_{x \in \{-1,1\}^d \setminus U} \exp(x^\t A x) \leq 2^d \exp(\beta(c_G - 1)).\]
It follows that
\begin{align*} 
\pstar(\{-1,1\}^d \setminus U) 
&= \frac{\sum_{x\in \{-1,1\}^d \setminus U} \exp(x^\t A x)}{\sum_{x\in \{-1,1\}^d} \exp(x^\t A x)} \\ 
&\leq \frac{2^d \exp(\beta(c_G - 1))}{\exp(\beta c_G) + 2^d \exp(\beta (c_G - 1))}  \\ 
&= \frac{2^d}{e^\beta + 2^d} \\ 
&\leq 1/100
\end{align*}
by choice of $\beta$. Let $\nu$ be the law of the sample $\xtil$. By assumption, we have $\WW(\nu,\pstar) \leq 1/4$. Let $V := \{y \in \RR^d: \indic[y \geq 1/2] \in U\}$ where $\indic[y\geq 1/2]$ refers to the coordinate-wise thresholding of $y$. By \cref{lem:rounding-mass} and the preceding bounds, we have $\nu(V) \geq \pstar(U) - (1/4)^2 \geq 0.9$. Thus, the rounded vector $\hat{x}$ satisfies $\beta^{-1} \hat{x}^\t A \hat{x} = c_G$ with probability at least $0.9$. If the answer to the MAX-CUT instance is NO, then the output is always NO, since it holds almost surely that $\beta^{-1} \hat{x}^\t A \hat{x} = c_G(S) \leq c_G < k$ where $\hat{x} = \indic[S]$. If the answer is YES, then with probability at least $0.9$ we have $\beta^{-1} \hat{x}^\t A \hat{x} = c_G \geq k$ and hence the output is YES. Moreover, the time complexity of the reduction is $\poly(d)$ by assumption. Thus, $\NP \subseteq \BPP$.
\end{namedproof}

\section{Omitted details from \cref{sec:nsd}}

The following lemma lower bounds the Wasserstein distance between a distribution on the hypercube $\mu$ and any other distribution $\nu$ in terms of the measure that $\nu$ puts on the ``rounding'' of the measure onto the hypercube:
\begin{lemma}
\label{lem:rounding-mass}
Let $\mu$ be any probability measure supported on $\{\pm1\}^d$, let $S\subseteq \{\pm1\}^d$,
and let $\nu$ be any probability measure on $\RR^d$ with finite second moment. Let $R_{S} := \{y\in\RR^d : \sgn(y)\in S\}.$ Then:
\[
W_2(\mu,\nu)^2 \;\ge\; \mu(S) - \nu(R_S).
\]
Thus, if $W_2(\mu,\nu)\le \varepsilon$, then
\[
\nu(R_S) \;\ge\; \mu(S) - \varepsilon^2.
\]
\end{lemma}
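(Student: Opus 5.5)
Take an optimal (or near-optimal) coupling $\pi$ of $\mu$ and $\nu$, with $(X,Y)\sim\pi$, $X\in\{\pm1\}^d$ almost surely. Such a coupling exists since both measures have finite second moment, and approximate optimality suffices because the bound is not strict. The goal is the pointwise inequality
\[\norm{X-Y}_2^2 \;\ge\; \indic[\sgn(Y)\neq X].\]
It holds because whenever $\sgn(Y)\neq X$, some coordinate $i$ has $\sgn(Y_i)\neq X_i$. Since $X_i=\pm1$, $Y_i$ then lies weakly on the opposite side of $0$ from $X_i$, so $|X_i-Y_i|\ge 1$. We fix a convention for $\sgn(0)$, say $\sgn(0)=1$; in either case $|X_i-Y_i|\geq |X_i|=1$ when $Y_i$ is zero or of opposite sign. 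Summing over coordinates gives $\norm{X-Y}_2^2\ge (X_i-Y_i)^2\ge 1$.

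Taking expectations under $\pi$ then gives
\[W_2(\mu,\nu)^2 \;=\; \EE_\pi\norm{X-Y}_2^2 \;\ge\; \Pr_\pi[\sgn(Y)\neq X].\]
In the near-optimal case we instead get $W_2^2+\eta$ on the left and let $\eta\to0$.

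Next, relate this to $\mu(S)-\nu(R_S)$. The event $\{X\in S\}$ is contained in $\{Y\in R_S\}\cup\{\sgn(Y)\neq X\}$, because if $X\in S$ and $\sgn(Y)=X$ then $\sgn(Y)\in S$. A union bound yields
\[\mu(S) \;\le\; \nu(R_S)+\Pr_\pi[\sgn(Y)\neq X].\]
Combining the two displays gives $W_2(\mu,\nu)^2\ge\mu(S)-\nu(R_S)$, and the second claim follows immediately by substituting $W_2\le\varepsilon$.

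There is no real obstacle here. The only care needed is the convention for $\sgn$ at zero and the existence of an optimal coupling, which is standard for measures with finite second moments, or can be avoided entirely by the $\eta$-approximation argument above.
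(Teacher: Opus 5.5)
Your proposal is correct and follows essentially the same argument as the paper: couple $\mu$ and $\nu$, observe that $\sgn(Y)\neq X$ forces $\norm{X-Y}_2^2\ge 1$ (a step you justify coordinatewise more carefully than the paper does), and lower-bound the probability of that mismatch by $\mu(S)-\nu(R_S)$. The paper works directly with the smaller event $\{X\in S,\ Y\notin R_S\}$ under an arbitrary coupling and then takes the infimum over couplings. This is equivalent to your union bound plus the $\eta$-approximation.
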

\begin{proof}
Fix an arbitrary coupling $(X,Y)$ of $\mu$ and $\nu$, i.e.\ $X\sim \mu$ and $Y\sim \nu$.
Define the event
\[
E := \{ X\in S \ \text{and}\ Y\notin R_S \}.
\]
On $E$, we have $X\in S \subseteq \{\pm1\}^n$, while $Y\notin R_S$ means $\sgn(Y)\notin S$.
In particular, since $X\in S$ but $\sgn(Y)\notin S$, we must have $\sgn(Y)\neq X$.
Therefore, on the event $E$ we have $\|X-Y\|_2^2 \ge 1$.
Consequently,
\[
\E\|X-Y\|_2^2 \;\ge\; \E\big[\|X-Y\|_2^2 \cdot \mathbf{1}_E\big] \;\ge\; \E[\mathbf{1}_E] \;=\; \Pp(E).\]

Next we lower bound $\Pp(E)$.
Since $\Pp(E)=\Pp(X\in S) - \Pp(X\in S,\ Y\in R_S)$, and since $\Pp(X\in S)=\mu(S)$ and $\Pp(X\in S,\ Y\in R_S) \le \Pp(Y\in R_S)=\nu(R_S)$, we get:
\[
\Pp(E) \;\ge\; \mu(S) - \nu(R_S).
\]
Combining the two inequalities, we get that for an arbitrary coupling $(X,Y)$,
\[
\E\|X-Y\|_2^2 \;\ge\; \mu(S) - \nu(R_S).
\]
Taking the infimum over all couplings gives
\[
W_2(\mu,\nu)^2 = \inf \E\|X-Y\|_2^2 \;\ge\; \mu(S) - \nu(R_S),
\]
which is what we wanted. 
\end{proof}

\subsection{Proof of \cref{lem:mass-on-solutions}}\label{sec:mass-on-solutions}

\begin{lemma}[Restatement of \cref{lem:mass-on-solutions}]
Given a PARTITION instance $w\in\mathbb{Z}^d$, define the set $S_w := \{x\in\{\pm1\}^d : w^\top x=0\}$. Assume $S_w\neq \emptyset$ (i.e.\ the PARTITION instance is a YES instance). Then $q_w(S_w) \;\ge\; \frac{200}{201}$, where $q_w \in \Delta(\RR^d)$ is defined by $q_w(x) \propto \exp(-(d+5)\langle x,w\rangle^2)$.
\end{lemma}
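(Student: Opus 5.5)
Since $q_w$ is supported on the finite set $\{\pm1\}^d$, I will treat it as a weighted counting measure: $q_w(x) = \exp(-(d+5)\langle x,w\rangle^2)/Z_w$ for $x\in\{\pm1\}^d$, with $Z_w := \sum_{y\in\{\pm1\}^d}\exp(-(d+5)\langle y,w\rangle^2)$. I will compare the unnormalized mass on $S_w$ with the mass on its complement.

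\textbf{Mass on $S_w$.} Every $x\in S_w$ has $\langle x,w\rangle = 0$, so its weight is exactly $1$. Since $S_w\neq\emptyset$ by assumption, the unnormalized mass of $S_w$ is at least $1$.

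\textbf{Mass on the complement.} For $x\in\{\pm1\}^d\setminus S_w$, the quantity $\langle x,w\rangle$ is a nonzero integer, because $w\in\ZZ^d$. Hence $\langle x,w\rangle^2\ge 1$, and the weight of $x$ is at most $e^{-(d+5)}$. Summing over at most $2^d$ such points, the unnormalized mass of the complement is at most $2^d e^{-(d+5)} = (2/e)^d e^{-5} \le e^{-5}$.

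\textbf{Conclusion.} Combining the two bounds,
\[q_w(\{\pm1\}^d\setminus S_w) \le \frac{e^{-5}}{1+e^{-5}} = \frac{1}{e^5+1} \le \frac{1}{201},\]
where the last step uses $e^5 \approx 148.4$. The intended target $1/201$ is slightly stronger than this estimate delivers, and this is the one step that is not routine. It follows from a sharper count of the complement, $2^d e^{-(d+5)} \le (2/e)^d e^{-5}$, together with the observation that the degenerate case $d=0$ (or small $d$) is handled directly. For $d\ge 1$, the factor $(2/e)^d \le 2/e$ gives $q_w(\text{complement}) \le \frac{(2/e)e^{-5}}{1} \approx 0.005 < 1/201$. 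I will therefore use the factor $(2/e)^d$ rather than discarding it. If this turns out to be too tight, I would simply note that the exponent constant $d+5$ can be increased without affecting the reduction. Everything apart from this numerical check is a direct calculation.
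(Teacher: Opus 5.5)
Your argument is correct and shares the paper's skeleton: weight exactly $1$ on each point of $S_w$, and weight at most $e^{-(d+5)}$ off $S_w$ because $w^\top x$ is a nonzero integer there. The difference is in how you close the constant.

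The paper discards the factor $(2/e)^d$. Instead it uses the antipodal symmetry $x^\star\in S_w \Rightarrow -x^\star\in S_w$ to get $|S_w|\ge 2$. This gives $q_w(S_w)\ge 2/(2+e^{-5})\ge 200/201$ with comfortable slack, since the complement has mass at most $e^{-5}/2\approx 0.0034$.

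You keep only $|S_w|\ge 1$ and use $(2/e)^d\le 2/e$ for $d\ge 1$, so the complement has mass at most $2e^{-6}$. This suffices, but only barely. The inequality $2e^{-6}<1/201$ is equivalent to $e^6>402$, which holds since $e^6\approx 403.43$, a margin of under half a percent. Your written check ``$\approx 0.005 < 1/201$'' does not establish it, because $0.005 > 1/201\approx 0.004975$. You should state $e^6>402$ explicitly.

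A cleaner fix within your route: for $d=1$, a nonempty $S_w$ forces $w=0$, so the complement is empty. Hence only $d\ge 2$ matters, and there $(2/e)^d e^{-5}\le 4e^{-7}\approx 0.0037$.

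Two smaller points:
\begin{itemize}
\item Your ``sharper count'' $2^d e^{-(d+5)}\le (2/e)^d e^{-5}$ is an identity, not a sharpening. The entire gain comes from $(2/e)^d\le 2/e$.
\item The fallback of enlarging the constant $d+5$ would prove a different lemma than the one stated, though it turns out not to be needed.
\end{itemize}
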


\begin{proof}

Let $\beta:= d+5$. Let us also denote
\[
\widetilde Z_w := \sum_{x\in\{\pm1\}^d} \exp\!\big(-\beta (w^\top x)^2\big),
\]
so that
\[
q_w(x) \;=\; \frac{\exp\!\big(-\beta (w^\top x)^2\big)}{\widetilde Z_w}.
\]

For $x\in S_w$, we have $w^\top x=0$, hence $\exp(-\beta (w^\top x)^2)=1$. Thus
\[
\sum_{x\in S_w} \exp\!\big(-\beta(w^\top x)^2\big) = |S_w|.
\]

For $x\notin S_w$, we have $w^\top x \in \mathbb{Z}\setminus\{0\}$, so $|w^\top x|\ge 1$ and therefore
\[
\exp\!\big(-\beta(w^\top x)^2\big) \;\le\; e^{-\beta}.
\]
Hence
\[
\widetilde Z_w
= \sum_{x\in S_w} 1 \;+\; \sum_{x\in \{\pm 1\}^d\setminus S_w}\exp\!\big(-\beta(w^\top x)^2\big)
\;\le\; |S_w| + (2^d-|S_w|)\,e^{-\beta}
\;\le\; |S_w| + 2^d e^{-\beta}.
\]
Therefore
\[
q_w(S_w) \;=\; \frac{|S_w|}{\widetilde Z_w} \;\ge\; \frac{|S_w|}{|S_w| + 2^d e^{-\beta}}.
\]

We lower bound $|S_w|$ and upper bound $2^d e^{-\beta}$.
Since $S_w\neq\emptyset$, pick some $x^\star \in S_w$. Then also $-x^\star \in S_w$ because
$w^\top(-x^\star)=-w^\top x^\star =0$. For $d\ge 2$, $x^\star\neq -x^\star$, hence $
|S_w|\ge 2.$ Next, with $\beta=d+5$,
we have $
2^d e^{-\beta} = 2^d e^{-(d+5)} = e^{-5}\left(\frac{2}{e}\right)^d \le e^{-5} < \frac{1}{100}.$
Combining the bounds,
\[
q_w(S_w) \;\ge\; \frac{|S_w|}{|S_w| + 2^d e^{-\beta}}
\;\ge\; \frac{2}{2 + 1/100}
\;=\; \frac{200}{201}
\]
as claimed.
\end{proof}

\section{Omitted details from \cref{sec:psd}}

We recall notation from \cref{sec:psd}. Fix $d,r \in \NN$. Suppose that $p \in \Delta(\RR^d)$ satisfies \cref{ass:score} with parameter $\Cnorm \geq 1$. Fix $L \in \RR^{r \times d}$, and define $\pstar \in \Delta(\RR^d)$ by
\[\pstar(x)\propto p(x)e^{\frac{1}{2}\norm{Lx}_2^2}.\]

\subsection{Analysis of discretization error}

The Hubbard-Stratonovich transform (\cref{lemma:hs-transform}) decomposes $\pstar$ as a mixture of linear-tilted distributions $p(x;L^\t z)$ (\cref{def:linear-tilt}) where $z$ ranges continuously over $\RR^d$. The following lemma shows that the range of $z$ can be discretized with small error:

\begin{lemma}\label{lemma:discretization-error}
Fix $\epsilon \in (0,1)$ and $ \gamma,R>0$. Let $D := \sup_{x\in\supp(p)}\norm{Lx}_2$. Let $\cS := \gamma \ZZ^r \cap \CB_{r,2}(R)$. Define
\[q(x) \propto \sum_{z \in \cS} Z(z) e^{-\frac{1}{2}\norm{z}_2^2} p(x;L^\t z).\]
If $R \geq D+2\sqrt{r}+2\sqrt{\log(1/\epsilon)}$ and $\gamma \leq \epsilon/D$, then
\[\TV(q, \pstar) \leq 18\epsilon.\]
\end{lemma}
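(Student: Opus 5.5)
By \cref{lemma:hs-transform}, $\pstar$ is the $x$-marginal of the mixture with weight function $w(z) := Z(z)e^{-\frac{1}{2}\norm{z}_2^2}$ over $z\in\RR^r$ and components $p(\cdot;L^\t z)$. The distribution $q$ is the same mixture with the integral replaced by a Riemann sum over the grid $\cS$. Since TV between mixtures with mixing laws $\pi,\pi'$ and component maps $z\mapsto K_z, z\mapsto K'_z$ is at most $\TV(\pi,\pi')$ plus the averaged $\TV(K_z,K'_z)$, I will compare the two mixtures through a common coupling. Concretely, I will view the discrete mixture as a continuous one: for $z\in\cS$ let $C(z) := z+[0,\gamma)^r$ be its grid cell. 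Then $q$ is also the mixture over $z'\in\RR^r$ with density proportional to $w(\lfloor z'\rfloor_\gamma)\indic[\lfloor z'\rfloor_\gamma\in\cS]$ and component $p(\cdot;L^\t \lfloor z'\rfloor_\gamma)$, where $\lfloor\cdot\rfloor_\gamma$ rounds down to the grid.

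\textbf{Step 1: Smoothness in $z$.} For $z,z'$ with $\norm{z-z'}_2\le \delta$, I will show three bounds.
\begin{itemize}
\item The weights satisfy $Z(z')/Z(z)\in[e^{-D\delta},e^{D\delta}]$, because $|\langle Lx,z-z'\rangle|\le D\delta$ on $\supp(p)$.
\item The Gaussian factors satisfy $\bigl|\tfrac12\norm{z}_2^2-\tfrac12\norm{z'}_2^2\bigr|\le \delta(\norm{z}_2+\delta)$.
\item The components satisfy $p(x;L^\t z')/p(x;L^\t z)\in[e^{-2D\delta},e^{2D\delta}]$ pointwise, so $\TV\le e^{2D\delta}-1$.
\end{itemize}
The cell diameter is $\delta=\gamma\sqrt r$.

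\textbf{Step 2: Tail truncation.} The $z$-marginal of the lifted distribution has density proportional to $Z(z)e^{-\frac12\norm z_2^2}=\EE_{x\sim p}\bigl[e^{\frac12\norm{Lx}_2^2}e^{-\frac12\norm{z-Lx}_2^2}\bigr]$. It is therefore a mixture of $\normal(Lx,I_r)$ with $\norm{Lx}_2\le D$. Gaussian concentration then gives mass at most $e^{-(R-D-\sqrt r)^2/2}\le\epsilon$ outside $\CB_{r,2}(R-\delta)$ under the hypothesis on $R$, once $\delta\le\sqrt r$. The same bound, combined with the ratio bounds of Step 1, controls the boundary cells.

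\textbf{Step 3: Assembly, and the main obstacle.} Inside the ball, the unnormalized densities $w(z')$ and $w(\lfloor z'\rfloor_\gamma)$ differ by a factor $e^{\pm(D\delta+\delta(R+\delta))}$. This is the main obstacle. The dependence on $D$ is fine, but the Gaussian factor's variation involves $\gamma R\sqrt r$, which the hypothesis $\gamma\le\epsilon/D$ alone does not obviously control.

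My first attempt will avoid comparing Gaussian factors pointwise. Instead I will integrate $e^{-\frac12\norm{z}_2^2}$ exactly over each cell, i.e. use the representation $w(z)=\EE_x[\dots\normal(Lx,I)(z)]$. Then, for each fixed $x$, I will compare the Riemann sum of the Gaussian $\normal(Lx,I_r)$ with its integral. This comparison involves $e^{\langle Lx,z\rangle}$, whose variation over a cell is $D\gamma\le\epsilon$, while the Gaussian part becomes a lattice-sum approximation. If that approach still leaves a term in $\gamma\sqrt r R$, I will fall back to the cruder bound and absorb $\gamma\sqrt r(R+D)$ into a constant multiple of $\epsilon$, using the relations among the parameters.

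Finally, I will normalize using \cref{lemma:tv-unnormalized-densities}, which costs a factor of $2$. Summing the weight error, the component error $e^{2D\gamma\sqrt r}-1$ and the tail error gives a bound of the form $C\epsilon$. The constants would then be tracked to obtain $18\epsilon$.
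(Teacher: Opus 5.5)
Your structure matches the paper's: truncate the $z$-integral by Gaussian concentration, compare the three factors $Z(\cdot)$, $e^{-\frac12\norm{\cdot}_2^2}$ and $p(\cdot;L^\top\cdot)$ cell by cell, then normalize with \cref{lemma:tv-unnormalized-densities}. However, the obstacle you flag in Step 3 is a genuine gap, and neither of your proposed ways out closes it under the stated hypothesis $\gamma\le\epsilon/D$.

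\textbf{The fallback fails.} The hypotheses do not bound $\gamma\sqrt r(R+D)$ by $O(\epsilon)$. For example, $D=\epsilon/10$ and $\gamma=10$ are admissible, and then $\gamma\sqrt r\,R\ge 20r$. In that regime, the bound ``$\TV$ of mixing laws plus averaged component $\TV$'' is useless. The step law on $z$ (essentially uniform on the single cell at the origin) and the true $z$-marginal (a mixture of $\normal(Lx,I_r)$) are far apart in $\TV$, even though the $x$-marginals are close.

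\textbf{The ``first attempt'' fails for the same reason.} For $\gamma\gg 1$, the Riemann sum $\gamma^r\sum_{z\in\cS}\normal(Lx,I_r)(z)$ is nowhere near its integral $1$.

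\textbf{The $D$-dependent terms are not dimension-free.} The terms $e^{\pm D\gamma\sqrt r}$ and $e^{2D\gamma\sqrt r}-1$ are only $O(\epsilon\sqrt r)$, so summing them does not give $18\epsilon$.

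\textbf{Missing idea.} Only the $x$-dependence of the discretization error matters. Since $Z(z)p(x;L^\top z)=p(x)e^{\langle Lx,z\rangle}$, one has exactly
\[
q(x)=\pstar(x)\,\frac{\Theta(Lx)}{\EE_{X\sim\pstar}[\Theta(LX)]},\qquad \Theta(y):=\sum_{z\in\cS}e^{-\frac12\norm{z-y}_2^2}.
\]
So the lemma is equivalent to $\Theta$ staying within a factor $1\pm O(\epsilon)$ of a constant on $\{\norm{y}_2\le D\}$. Proving this from $\gamma\le\epsilon/D$ alone needs lattice-specific estimates, for instance Poisson summation for the untruncated theta product together with a discrete-Gaussian tail bound for the truncation. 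Your proposal does not supply these.

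\textbf{The paper has the same gap.} It does not resolve this point either. It asserts that $\norm{z-z'}_2\le\gamma\le\epsilon/R$ for $z'\in B(z)$. This silently replaces the hypothesis with the stronger $\gamma\le\epsilon/R$, and it drops the $\sqrt r$ in the $\ell_2$-diameter of an $\ell_\infty$-cell. From there, each of the three factors is within $1\pm O(\epsilon)$, and \cref{lemma:tv-unnormalized-densities} gives $16\epsilon$. Your diagnosis is therefore sharper than the paper's text. Under an assumption like $\gamma\lesssim\epsilon/(\sqrt r(R+D))$, your crude route goes through and coincides with the paper's proof.

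Your treatment of boundary cells, via $\CB_{r,2}(R-\delta)$, is also more careful than the paper's. The paper implicitly claims that the cells centered in $\cS$ cover $\CB_{r,2}(R)$, which fails near the boundary. Still, as a proof of the lemma with the stated hypothesis $\gamma\le\epsilon/D$, your proposal is incomplete at exactly the step you identified.
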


\begin{proof}
For any $z,z' \in \RR^r$, we have 
\[e^{-D\norm{z-z'}_2} \leq \frac{Z(z)}{Z(z')} \leq e^{D\norm{z-z'}_2}\]
and thus, for all $x \in \RR^d$,
\[e^{-2D\norm{z-z'}_2} \leq \frac{p(x;L^\t z)}{p(x;L^\t z')} \leq e^{2D\norm{z-z'}_2}.\]
For each $z \in \cS$ define $B(z) \subset \RR^r$ by $B(z) := z + \CB_{r,\infty}(\gamma/2)$. Then the sets $(B(z): z \in \gamma \ZZ^r)$ partition $\RR^d$, so by \cref{lemma:hs-transform},
\begin{align*}
\pstar(x) 
&\propto \int_{\RR^r} Z(z) e^{-\frac{1}{2}\norm{z}_2^2} p(x;L^\t z) \, \dd z \\ 
&= \sum_{z\in \gamma\ZZ^r} \int_{B(z)} Z(z') e^{-\frac{1}{2}\norm{z'}_2^2} p(x;L^\t z') \, \dd z'.
\end{align*}
For convenience, define
\begin{align*}
\pstar_R(x) 
&\propto f(x) := \sum_{z\in \cS} \int_{B(z)} Z(z') e^{-\frac{1}{2}\norm{z'}_2^2} p(x;L^\t z') \, \dd z'.
\end{align*}
We observe that for any $x \in \supp(p)$,
\begin{align*}
1 
&\geq \frac{\sum_{z \in \cS} \int_{B(z)} e^{-\frac{1}{2}\norm{z'}_2^2 + \langle Lx,z'\rangle} \, \dd z'}{\int_{\RR^r} e^{-\frac{1}{2}\norm{z}_2^2 + \langle Lx,z\rangle} \, \dd z} \\ 
&\geq \frac{\int_{\CB_{r,2}(R)} e^{-\frac{1}{2}\norm{z}_2^2 + \langle Lx,z\rangle} \, \dd z}{\int_{\RR^r} e^{-\frac{1}{2}\norm{z}_2^2 + \langle Lx,z\rangle} \, \dd z} \\ 
&= \Pr_{z \sim \normal(Lx, I_r)}[\norm{z}_2 \leq R] \\ 
&\geq 1-\epsilon
\end{align*}
by \cref{lemma:chi-squared-concentration}, the assumption that $R \geq D + 2\sqrt{r}+2\sqrt{\log(1/\epsilon)}$, and the fact that $\norm{Lx}_2 \leq D$. It follows that
\begin{align*}
(1-\epsilon)\int_{\RR^d} p(x) \int_{\RR^r} e^{-\frac{1}{2}\norm{z'}_2^2 + \langle Lx,z'\rangle} \, \dd z' \, \dd x
&\leq \int_{\RR^d} p(x) \sum_{z \in \cS} \int_{B(z)} e^{-\frac{1}{2}\norm{z'}_2^2 + \langle Lx,z'\rangle} \, \dd z' \, \dd x  \\ 
&\leq \int_{\RR^d} p(x) \int_{\RR^r} e^{-\frac{1}{2}\norm{z'}_2^2 + \langle Lx,z'\rangle} \, \dd z' \, \dd x
\end{align*}
and hence
\begin{align*}
\pstar_R(x) = \frac{p(x) \sum_{z \in \cS} \int_{B(z)} e^{-\frac{1}{2}\norm{z'}_2^2 + \langle Lx,z'\rangle} \, \dd z'}{\int_{\RR^d} p(x) \sum_{z \in \cS} \int_{B(z)} e^{-\frac{1}{2}\norm{z'}_2^2 + \langle Lx,z'\rangle} \, \dd z' \, \dd x} \in [(1-\epsilon)\pstar(x), (1-\epsilon)^{-1} \pstar(x)] 
\end{align*}
which means that \[\TV(\pstar,\pstar_R) = \int_{\RR^d} |\pstar(x) - \pstar_R(x)| \, \dd x \leq 2\epsilon \int_{\RR^d} \pstar(x) \, \dd x = 2\epsilon.\] Next, we compare $\pstar_R$ with $q$. Write
\[g(x) := \sum_{z\in \cS} Z(z) e^{-\frac{1}{2}\norm{z}_2^2} p(x;L^\t z).\]
Observe that for any $z,z' \in \CB_{r,2}(R)$ with $z' \in B(z)$, we have $\norm{z-z'}_2 \leq \gamma \leq \epsilon/R$, and therefore $|Z(z)-Z(z')| \leq 2\epsilon Z(z)$ and $|p(x;L^\t z)-p(x;L^\t z')| \leq 4\epsilon p(x;L^\t z)$ and $|e^{-\frac{1}{2}\norm{z}_2^2} - e^{-\frac{1}{2} \norm{z'}_2^2}| \leq 2\epsilon \cdot e^{-\frac{1}{2}\norm{z}_2^2}$. Thus,
\begin{align*}
\int_{\RR^d} |f(x)-g(x)| \,\dd x 
&\leq \sum_{z\in \cS} \int_{B(z)} \int_{\RR^d} \left|Z(z')e^{-\frac{1}{2}\norm{z'}_2^2} p(x;L^\t z') - Z(z) e^{-\frac{1}{2}\norm{z}_2^2} p(x;L^\t z)\right| \, \dd x \, \dd z' \\ 
&\leq \sum_{z\in \cS} \int_{B(z)} \int_{\RR^d} \left|Z(z') - Z(z)\right| e^{-\frac{1}{2}\norm{z}_2^2} p(x;L^\t z) \, \dd x \, \dd z' \\ 
&+ \sum_{z\in \cS} \int_{B(z)} \int_{\RR^d} Z(z')\left|e^{-\frac{1}{2}\norm{z'}_2^2} - e^{-\frac{1}{2}\norm{z}_2^2}\right| p(x;L^\t z) \, \dd x \, \dd z' \\ 
&+ \sum_{z\in \cS} \int_{B(z)} \int_{\RR^d} Z(z')e^{-\frac{1}{2}\norm{z'}_2^2} \left|p(x;z
) - p(x;L^\t z')\right| \, \dd x \, \dd z' \\ 
&\leq 8\epsilon \cdot \sum_{z\in S} \int_{B(z)} \int_{\RR^d} Z(z')e^{-\frac{1}{2}\norm{z'}_2^2} p(x;z'
) \, \dd x \, \dd z' \\ 
&= 8\epsilon \int_{\RR^d} f(x) \, \dd x.
\end{align*}
It follows from \cref{lemma:tv-unnormalized-densities} that $\TV(\pstar_R, q) \leq 16\epsilon$. Combining the above bounds gives $\TV(\pstar,q) \leq 18\epsilon$ as claimed. 
\end{proof}

\subsection{Technical lemmas}

\begin{lemma}[Concentration of $\chi^2$-random variable]\label{lemma:chi-squared-concentration}
Fix $d \in \NN$ and let $Z \sim \normal(0,I_d)$. Then for any $\epsilon > 0$,
\[\Pr[\norm{Z}_2 > 2\sqrt{d} + 2\sqrt{\log(1/\epsilon)}] \leq \epsilon.\]
\end{lemma}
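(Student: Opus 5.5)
We need to show $\Pr[\norm{Z}_2 > 2\sqrt{d} + 2\sqrt{\log(1/\epsilon)}] \leq \epsilon$ for $Z \sim \normal(0,I_d)$. The plan is Gaussian concentration for the $1$-Lipschitz function $z \mapsto \norm{z}_2$, combined with a bound on its mean; a Chernoff bound on the moment generating function of $\norm{Z}_2^2$ serves as a fully self-contained alternative.

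\emph{Route via Lipschitz concentration.} Since $\norm{\cdot}_2$ is $1$-Lipschitz, the Gaussian concentration inequality (Tsirelson--Ibragimov--Sudakov) gives
\[\Pr\left[\norm{Z}_2 \geq \EE\norm{Z}_2 + t\right] \leq e^{-t^2/2}.\]
By Jensen, $\EE\norm{Z}_2 \leq \sqrt{\EE\norm{Z}_2^2} = \sqrt{d} \leq 2\sqrt{d}$. Choose $t = 2\sqrt{\log(1/\epsilon)}$. Then $e^{-t^2/2} = \epsilon^2 \leq \epsilon$ whenever $\epsilon \leq 1$; for $\epsilon > 1$ the claim is trivial. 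This already has a lot of slack.

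\emph{Self-contained Chernoff route.} If one prefers not to cite concentration of Lipschitz functions, use $\EE[e^{\lambda\norm{Z}_2^2}] = (1-2\lambda)^{-d/2}$ with $\lambda = 1/4$, which gives $\EE[e^{\norm{Z}_2^2/4}] = 2^{d/2}$. Markov's inequality then yields
\[\Pr[\norm{Z}_2 > s] \leq 2^{d/2} e^{-s^2/4}.\]
Take $s = 2\sqrt{d} + 2\sqrt{\log(1/\epsilon)}$. Then $s^2 \geq 4d + 4\log(1/\epsilon)$, so the bound is at most
\[2^{d/2} e^{-d}\,\epsilon \leq \epsilon,\]
since $2^{1/2} < e$.

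The only step needing care is checking constants, namely that the factor $2$ in front of both terms absorbs the prefactor $2^{d/2}$ (or the mean, in the first route). As computed above, this works comfortably, so I would present the Chernoff version as the proof since it is elementary and complete.
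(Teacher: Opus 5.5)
Your proof is correct; both routes work, and you handle the constants properly. The paper gives no proof to compare against: the lemma sits among the appendix technical lemmas as a standard fact. Its only use is to show $\Pr_{z\sim\normal(Lx,I_r)}[\norm{z}_2\le R]\ge 1-\epsilon$ in \cref{lemma:discretization-error}, where the triangle inequality reduces it to your centered statement.

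The Lipschitz-concentration route is shorter but imports the Tsirelson--Ibragimov--Sudakov inequality. It also shows the stated constants are loose, since you get $\epsilon^2$. The Chernoff route needs only the moment generating function of a $\chi^2_d$ variable and Markov's inequality, so it is a sensible choice for a self-contained write-up.

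One small point: make the reduction to $\epsilon\le 1$ explicit in the Chernoff version as well, not only in the first route. For $\epsilon>1$ the quantity $\sqrt{\log(1/\epsilon)}$ is not real and the claim is trivial. For $\epsilon\le 1$, this reduction is what makes the cross term $8\sqrt{d\log(1/\epsilon)}$ in $s^2$ nonnegative, which your inequality $s^2\ge 4d+4\log(1/\epsilon)$ relies on.
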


\begin{lemma}\label{lemma:tv-unnormalized-densities}
Let $f,g:\RR^d \to \RR_{\geq 0}$ be integrable, and let $p,q\in\Delta(\RR^d)$ be defined by $p(x) \propto f(x)$ and $q(x) \propto g(x)$. Then
\[\TV(p,q) \leq \frac{2 \int |f(x)-g(x)| \, \dd x}{\int f(x) \, \dd x}.\]
\end{lemma}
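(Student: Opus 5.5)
The statement is \cref{lemma:tv-unnormalized-densities}, and I plan to prove it by directly comparing the two normalized densities. Write $F := \int f$ and $G := \int g$, so that $p = f/F$ and $q = g/G$. I use the convention visible in the paper's other proofs, namely $\TV(p,q) = \int |p-q|$. Under the alternative convention with a factor $\frac{1}{2}$, the bound only becomes stronger.

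The key step is a triangle inequality through the intermediate function $g/F$:
\[
\int \left|\frac{f}{F} - \frac{g}{G}\right| \leq \int \left|\frac{f}{F} - \frac{g}{F}\right| + \int \left|\frac{g}{F} - \frac{g}{G}\right|.
\]
The first term equals $\frac{1}{F}\int|f-g|$ directly. For the second term, I will pull out the constant and write
\[
\int g \cdot \left|\frac{1}{F}-\frac{1}{G}\right| = G\cdot\frac{|G-F|}{FG} = \frac{|G-F|}{F}.
\]
Then $|G-F| = \left|\int (g-f)\right| \leq \int|f-g|$.

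Adding the two terms gives
\[
\TV(p,q) \leq \frac{2\int|f-g|}{\int f},
\]
which is the claimed bound.

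The only point that needs care is degeneracy. I assume $F>0$, since otherwise the right-hand side is infinite or undefined and there is nothing to prove. I also need $G>0$ so that $q$ is defined. If $G = 0$, then $\int|f-g| = F$, the right-hand side equals $2$, and the bound holds trivially.

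I do not expect any real obstacle here. The argument is a two-line computation, and the only choice that matters is splitting through $g/F$ rather than $f/G$, because that split makes $\int f$ the denominator, as the statement requires.
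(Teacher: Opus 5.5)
Your proposal is correct and follows the same route as the paper's proof. Both triangulate through $g/\int f$, bound the second term by $|\int f - \int g|/\int f$, and then bound that by $\int |f-g|/\int f$. Your remarks on the $\TV$ normalization convention and on degenerate cases are harmless additions.
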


\begin{proof}
Set $Z_f := \int f(x) \, \dd x$ and $Z_g := \int g(x) \, \dd x$. Then we have
\begin{align*}
\TV(p,q) 
&= \int \left| \frac{f(x)}{Z_f} - \frac{g(x)}{Z_g}\right| \, \dd x  \\ 
&\leq \int \left| \frac{f(x)}{Z_f} - \frac{g(x)}{Z_f}\right| \, \dd x + \int \left| \frac{g(x)}{Z_f} - \frac{g(x)}{Z_g}\right| \, \dd x  \\ 
&\leq \frac{1}{Z_f} \int |f(x)-g(x)| \, \dd x + \left|\frac{1}{Z_f} - \frac{1}{Z_g}\right| \int g(x) \, \dd x \\ 
&\leq \frac{1}{Z_f} \int |f(x)-g(x)| \, \dd x + \frac{|Z_f - Z_g|}{Z_f} \\ 
&\leq \frac{1}{Z_f} \int |f(x)-g(x)| \, \dd x + \frac{\int |f(x) - g(x)| \, \dd x}{Z_f} \\ 
&= \frac{2}{Z_f} \int |f(x) - g(x)|\, \dd x
\end{align*}
as claimed.
\end{proof}

\iffalse
\begin{lemma}\label{lemma:kl-tilt-bound}
Let $p,\pstar,q \in \Delta(\RR^d)$ and let $C := \norm{\pstar/p}_\infty$. It holds that
\[\Dkl{\pstar}{q} \leq (1 + C + C\log C)(\Dkl{p}{q} + 1).\]
\end{lemma}

\begin{proof}
Define $f: \RR_{>0} \to \RR$ by $f(x) := x\log(x)$. Then for any $x>0$ and $C \geq 1$, we have
\begin{align*}
f(Cx)
&= C x\log(C x) \\ 
&= C f(x) + x \cdot C \log(C ) \\ 
&\leq C f(x) + (x\log x + 1) \cdot C \log(C ) \\ 
&= (C  + C \log C ) f(x) + C \log C.
\end{align*}
Therefore for any $\alpha \in [1,C]$, by convexity of $f$ and the fact that $f(x) \geq -1$ for all $x \in \RR$,
\begin{align*}
f(\alpha x)
&\leq \frac{\alpha - 1}{C-1} f(Cx) + \frac{C-\alpha}{C-1} f(x) \\ 
&\leq f(Cx) + f(x) + 2 \\ 
&\leq (1 + C + C\log C) f(x) + C \log(C) + 2 \\ 
&\leq (1 + C + C\log C)(f(x) + 1).
\end{align*}
For any $\alpha \in (0,1]$, we have \[f(\alpha x) \leq f(x) + 1 \leq (1+C+C\log(C))(f(x) + 1)\]
as well. Thus,
\begin{align*}
\Dkl{\pstar}{q} 
&= \EE_{x\sim q} f\left(\frac{\pstar(x)}{q(x)}\right) \\ 
&= \EE_{x\sim q} f\left(\frac{\pstar(x)}{p(x)} \cdot \frac{p(x)}{q(x)}\right) \\ 
&\leq (1+C+C\log C) \EE_{x\sim q} \left(f\left(\frac{p(x)}{q(x)}\right) + 1\right) \\ 
&= (1+C+C\log C)(\Dkl{p}{q} + 1)
\end{align*}
as claimed.
\end{proof}
\fi

\end{document}